\def\eqref#1{equation~\ref{#1}}
\def\1{\bm{1}}
\def\ervd{{\textnormal{d}}}
\def\ermC{{\textnormal{C}}}
\DeclareMathAlphabet{\mathsfit}{\encodingdefault}{\sfdefault}{m}{sl}
\SetMathAlphabet{\mathsfit}{bold}{\encodingdefault}{\sfdefault}{bx}{n}
\def\gC{{\mathcal{C}}}
\def\gR{{\mathcal{R}}}
\def\gX{{\mathcal{X}}}
\def\gY{{\mathcal{Y}}}
\def\gZ{{\mathcal{Z}}}
\def\sD{{\mathbb{D}}}
\def\sR{{\mathbb{R}}}
\def\emB{{B}}
\global\long\def\diag{\text{diag}}%
\def\OT{\textnormal{OT}}
\def\HOT{\textnormal{HOT}}
\crefname{appsec}{appendix}{appendices}
\Crefname{appsec}{Appendix}{Appendices}
\newtheorem{theorem}{Theorem}
\newtheorem{lemma}[theorem]{Lemma}
\let\emptyset\varnothing
\providecommand{\tabularnewline}{\\}
\providecommand{\algorithmname}{Algorithm}
\title{SAVA: Scalable Learning-Agnostic \\Data Valuation}
\author{Samuel Kessler$^*$ $^{\dagger}$ \\
Microsoft\\
\texttt{samuel.kessler@microsoft.com} \\
\AND
Tam Le\thanks{co-first authors. $^{\dagger}$ work done while at the University of Oxford.} \\
The Institute of Statistical Mathematics / RIKEN AIP\\
\texttt{tam@ism.ac.jp} \\
\AND
Vu Nguyen \\
Amazon \\
\texttt{vutngn@amazon.com}
}
\begin{document}

\maketitle

%%%%%%%%%%%%%%%%%%%%%%%%%%%%%%%%%%%%%%%%%%%%%%%%%%
%%%%%%%%%%%%%%%%%%%%%%%%%%%%%%%%%%%%%%%%%%%%%%%%%%
\begin{abstract}
Selecting data for training machine learning models is crucial since large, web-scraped, real datasets contain noisy artifacts that affect the quality and relevance of individual data points. These noisy artifacts will impact model performance. We formulate this problem as a data valuation task, assigning a value to data points in the training set according to how similar or dissimilar they are to a clean and curated validation set. Recently, \emph{LAVA}~\citep{just2023lava} demonstrated the use of optimal transport (OT) between a large noisy training dataset and a clean validation set, to value training data efficiently, without the dependency on model performance. However, the \emph{LAVA} algorithm requires the entire dataset as an input, this limits its application to larger datasets. Inspired by the scalability of stochastic (gradient) approaches which carry out computations on \emph{batches} of data points instead of the entire dataset, we analogously propose \emph{SAVA}, a scalable variant of \emph{LAVA} with its computation on batches of data points. Intuitively, \emph{SAVA} follows the same scheme as \emph{LAVA} which leverages the hierarchically defined OT for data valuation. However, while \emph{LAVA} processes the whole dataset, \emph{SAVA} divides the dataset into batches of data points, and carries out the OT problem computation on those batches. Moreover, our theoretical derivations on the trade-off of using entropic regularization for OT problems include refinements of prior work. We perform extensive experiments, to demonstrate that \emph{SAVA} can scale to large datasets with millions of data points and does not trade off data valuation performance. Our code is available at \url{https://github.com/skezle/sava}.
\end{abstract}

%%%%%%%%%%%%%%%%%%%%%%%%%%%%%%%%%%%%%%%%%%%%%%%%%%
%%%%%%%%%%%%%%%%%%%%%%%%%%%%%%%%%%%%%%%%%%%%%%%%%%
\section{Introduction}

%Progress in machine learning (ML) significantly hinges on the accessibility of extensive, pertinent, and high-quality datasets. However, real-world data origins frequently vary in size, relevance, and quality, leading to differences in their significance for an ML task. As a result, a fundamental question arises regarding how to measure the importance of individual data sources. Data valuation finds application in various scenarios, both within the realm of ML and beyond. It aids practitioners in optimizing model performance by prioritizing data sources with higher value []. Additionally, it facilitates strategic and economic decision-making in the context of data exchange []

%Data valuation learns the impact of individual data points on the validation performance of a learning algorithm in a fair and equitable manner. Cooperative game theory concepts are utilized to assign values to data points, with a focus on methods like the Shapley value and others mentioned in the literature.

Neural scaling laws empirically show that the generalization error decreases according to a power law as the data a model trains on increases. This has been shown for natural language processing, vision, and speech~\citep{kaplan2020scaling, henighan2020scaling, rosenfeld2019constructive, zhai2022scaling, radford2023robust}. However, training neural networks on larger and larger datasets for moderate improvements in model accuracy is inefficient. Furthermore, production neural network models need to be continuously updated given new utterances that enter into common everyday parlance~\citep{lazaridou2021mind, baby2022incremental}. It has been shown both in theory and in practice that sub-power law, exponential scaling of model performance with dataset size is possible by carefully selecting informative data and pruning uninformative data~\citep{sorscher2022beyond}, and that generalization improves with training speed~\citep{lyle2020bayesian}. Therefore, valuing and selecting data points that are informative: which have not been seen by the model, which do not have noisy labels, and which are relevant to the task we want to solve---are not outliers---can help to not only decrease training times, and reduce compute costs, but also improve overall test performance~\citep{mindermann2022prioritized, tirumala2023d4}.

%Instead, we can directly value training data by its optimal transport distance to a validation dataset~\citep{alvarez2020geometric}.

%\textbf{Introduce \emph{LAVA} and highlight its limitation}

%\citet{just2023lava}~introduces LAVA, a novel approach to data valuation that doesn't require a model or learning algorithm. The traditional challenge in data valuation is determining the value of individual data points for specific machine-learning tasks. \emph{LAVA} approaches this problem without assuming a priori knowledge of the learning algorithms to be used. Instead, it leverages a model-agnostic technique, combining multiple performance metrics and a neural network to dynamically adapt to different algorithms. The method is demonstrated to effectively value data across diverse ML tasks without relying on prior algorithmic specifications. \emph{LAVA} provides a flexible and adaptive framework for data valuation in machine learning scenarios. \vunguyen{I've added the limitation. Please strengthen it further} However, the estimation of \emph{LAVA} requires significant RAM memory consumption which grows quadratic $O(N^2)$ with the dataset size. This limitation potentially restricts \emph{LAVA} for deploying on a small device or personal computer.

%By using a data valuation which is a function of the loss we are introducing a dependency on a neural network model. 

\definecolor{plt_orange}{HTML}{cc6600}
\definecolor{plt_darkgreen}{HTML}{006600}

\begin{figure*}
\vspace{-2.5em}
    \centering
    \includegraphics[width=1.0\textwidth]
    {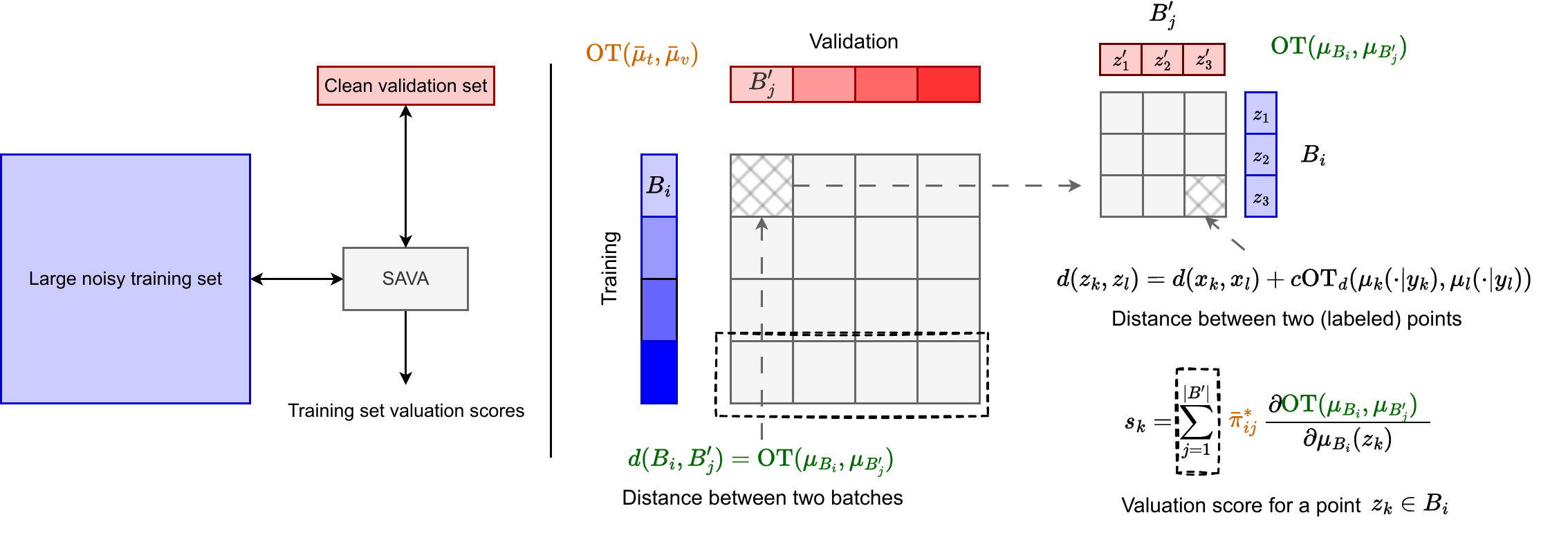}
    \vspace{-18pt}
    \caption{\textbf{Overview of the proposed \emph{SAVA} method}. On the left-hand side, \emph{SAVA} values data points in a noisy training dataset by comparing to a clean validation dataset. \emph{SAVA} performs scalable data valuation by solving multiple cheap and small OT problems on \emph{batches} of data points (on the right-hand side). Notations in \textcolor{plt_orange}{Orange} denote OT distances and plans over training and validation batches, while notations in \textcolor{plt_darkgreen}{Green} denote OT distances over data points in a batch. \textcolor{plt_orange}{$\text{OT}(\bar{\mu_t}, \bar{\mu_v})$} denotes the OT distance between training and validation batches and \textcolor{plt_orange}{$\bar{\pi}^*(\bar\mu_t, \bar\mu_v)$} is the associated OT plan in Eq. (\ref{eq:hot_plan}). \textcolor{plt_darkgreen}{$\text{OT}(\mu_{B_i}, \mu_{B'_{j}})$} is the OT distance between the batch $B_i$ from the training set and the batch $B'_j$ in the validation set where we use the feature-label distance in Eq. (\ref{eq:label_to_label}) as the ground cost for labeled data points in these batches. $s_k$ is \emph{SAVA}'s final valuation score for training labeled data point $z_k$ in Eq. (\ref{eq:HOT_data_point_tr}). The hatched box denotes the summation over the validation batches to value the data point $z_k$. We provide a visualization of these artifacts generated by \emph{SAVA} in~\Cref{fig:sava_alg_artifacts}.}
    \label{fig:data_selection_overview}
    %\vspace{-1.5em}
\end{figure*}

Popular data selection and data pruning methods use variations of the model loss to value data points~\citep{jiang2019accelerating, pruthi2020estimating, paul2021deep}. Crucially, these methods depend on the model used, and they are vulnerable to prioritizing data points with noisy labels or noisy features; data points that do not resemble the target validation set. When treating data valuation as a function of model performance, we introduce a dependency on a neural network model. This is the case when valuing a point using the leave-one-out (LOO) error, i.e., the change of model performance when the point is omitted from training. To rid our dependence on a neural network model, a promising idea is to leverage optimal transport (OT) between a training distribution and a clean validation distribution as a proxy to directly measure the value of data points in the training set in a model-agnostic fashion. In particular, the validation performance of each point in the training set can be estimated using the hierarchically defined Wasserstein between the training and the validation set~\citep{alvarez2020geometric, just2023lava}.  \emph{LAVA}~\citep{just2023lava} has been shown to successfully value data by measuring the sensitivity of the hierarchically defined Wasserstein between training data points and validation distributions in a model-agnostic fashion. However, \emph{LAVA} requires significant RAM consumption since its memory complexity grows quadratically $\mathcal{O}(N^2)$ with the dataset size $N$. This hinders \emph{LAVA} from scaling to large datasets.

%, and also potentially restricts it from deploying on either small devices or personal computers~\citep{cai2022enable, lin2023tiny}, especially for large-scale regime in application domains. 

%\vunguyen{I've added the limitation. Please strengthen it further}

%\textbf{Summarize our method and contribution}
%In this paper, we present a new method which is the extension of \emph{LAVA} for scalable data valuation tasks. Our method, called \emph{SAVA}, completely addresses the bottleneck of \emph{LAVA} in RAM memory requirement. In particular, \emph{SAVA} estimates the OT calibrated gradient across the local batches for which the batch size is much smaller and is independent of $N$. While being able to scale up significantly, our method performs favorably better than the original \emph{LAVA}. Using the small dataset size, \emph{SAVA} performs comparably to \emph{LAVA}. On the other hand, using the large-scale dataset size, our \emph{SAVA} outperforms \emph{LAVA} which can only handle up to a certain amount of number of samples.
In this paper, we present \emph{SAVA}, a scalable variant of \emph{LAVA}, for data valuation. Our method completely addresses the bottleneck of RAM requirements in \emph{LAVA}. Intuitively, \emph{SAVA} performs the OT computations on batches instead of on the entire dataset like \emph{LAVA} (hence the analogy to the stochastic approach to (sub)gradient computation). Specifically, \emph{SAVA} uses ideas from hierarchical optimal transport~\citep{yurochkin2019hierarchical,lee2019hierarchical} to enable OT calculations on batches instead of the entire dataset. We can scale up OT-based data valuation using \emph{SAVA} to large real-world web-scrapped datasets. On benchmark problems \emph{SAVA} performs comparably to \emph{LAVA} while being able to scale to datasets two orders of magnitude larger without memory issues, while \emph{LAVA} is limited due to hardware memory constraints. 

\textbf{Contributions:} In summary, our contributions are three-fold as follows:
\begin{itemize}
\item We introduce a novel scalable data valuation method called \emph{SAVA} that leverages the (sub)gradient of hierarchical optimal transport which performs OT computations on batches of data points, enabling OT-based data valuation to large datasets.
%~\citep{yurochkin2019hierarchical,lee2019hierarchical}

\item We correct the essential theoretical result on the trade-off for using entropic regularization to compute the OT (sub)gradient in \emph{LAVA}~\citep[Theorem 2]{just2023lava}.\footnote{See Appendix \ref{appendix_issue_theorem2_lava} for the detailed discussion.} Consequently, by building upon our refined theory, we derive the exact trade-off to estimate the (sub)gradient of hierarchical OT with entropic regularization in our framework. % \citet{just2023lava} overlooked one of the important constraints that the transportation plan should be nonnegative (i.e., $\pi \ge 0$), and had a mistake in the discrete formulation of entropic regularization.

\item We provide an extensive experimental analysis to demonstrate the improved scalability with increasing dataset sizes with respect to baselines.
\end{itemize}

%%%%%%%%%%%%%%%%%%%%%%%%%%%%%%%%%%%%%%%%%%%%%%%%%%
%%%%%%%%%%%%%%%%%%%%%%%%%%%%%%%%%%%%%%%%%%%%%%%%%%
\section{Optimal Transport for Data Valuation}

%%%%%%%%%%%%%%%%%%%%%%%%%%%%%%%%%%%%%%%%%%%%%%%%%%
\subsection{Optimal Transport for Labeled Datasets} \label{sec:OT_labeled_dataset}

Let $\gX$ be the feature space, and $V$ be the number of labels. We write $f_t : \gX \mapsto \left\{0, 1 \right\}^V$ and $f_v : \gX \mapsto \left\{0, 1 \right\}^V$ for the labeling functions for training and validation data respectively. Given the training set $\sD_t = \left\{(x_i, f_t(x_i)) \right\}_{i=1}^N$ and the validation set $\sD_v = \left\{(x'_i, f_v(x'_i)) \right\}_{i=1}^{N'}$, the corresponding measures for sets $\sD_t, \sD_v$ are $\mu_t(x, y) = \frac{1}{N}\sum_{i=1}^N \delta_{(x_i, y_i)}$ and $\mu_v(x', y') = \frac{1}{N'}\sum_{i=1}^{N'} \delta_{(x'_i, y'_i)}$ respectively where $\delta$ is the Dirac function, and $y, y'$ are labels of $x, x'$ respectively. For simplicity, let $\gZ = (\gX, \gY)$ where $\gY$ is the space of labels. For ease of reading, we summarize the notations in Table \ref{table:notation}.

% More concretely, for label $y_t$ in $\mu_t$, the conditional distribution of the features given $y_t$ in $\mu_t$ is as
% $\mu_t(\cdot | y_t) = \frac{\mu_t(\cdot) I[f_t(\cdot) = y_t]} {\int \mu_t(\cdot) I[f_t(\cdot) = y_t]}$,
% where $I$ is the indicator function (similarly for label $y_v$ in $\mu_v$).

% It is worth noting that the cost matrix is hierarchically defined; it is dependent on the solution of the OT problem between labels~Eq. (\ref{eq:groundcost_label}). More concretely, we have

Following \citet{alvarez2020geometric}, we compute the distance between two labels by leveraging the OT distance between the conditional distributions of the features given each label, i.e., $\mu_t(\cdot | y_t) = \frac{\mu_t(\cdot) I[f_t(\cdot) = y_t]} {\int \mu_t(\cdot) I[f_t(\cdot) = y_t]}$ for label $y_t$ in $\mu_t$, where $I$ is the indicator function. 

Let $\ervd$ be the metric of the feature space $\gX$, e.g., the Euclidean distance. The distance between labels $y_t$ and $y_v$ is $\OT_{\ervd}(\mu_t(\cdot | y_t), \mu_v(\cdot | y_v))$, i.e., the metric of the label space $\gY$. Consequently, the cost between feature-label pairs in $\gZ = (\gX, \gY)$ is
\begin{equation}\label{eq:label_to_label}
\ermC((x_t, y_t), (x_v, y_v)) = \ervd(x_t, x_v) + c\, \OT_{\ervd}(\mu_t(\cdot | y_t), \mu_v(\cdot | y_v)), 
\end{equation}
where $c > 0$ is a weight coefficient. Therefore, with the cost matrix $\ermC$, we can use the OT on the represented measures $\mu_t, \mu_v$ to compute the distance between the training and validation sets, i.e., $d(\sD_t, \sD_v)$, without relying on external models or parameters as follows 
\begin{equation}\label{eq:OT}
        d(\sD_t, \sD_v) := \OT_{\ermC}(\mu_t, \mu_v) = \min_{\pi \in \Pi(\mu_t, \mu_v)} \int_{\gZ \times \gZ} \ermC(z, z') d\pi(z, z'),
\end{equation}
where $\Pi(\mu_t, \mu_v)$ is the set of transportation couplings with marginals as $\mu_t$ and $\mu_v$. To simplify notations, we drop $\ermC$, and use $\OT$ when the context is clear. We further write \textcolor{orange}{$\pi^*$} for the optimal transport plan in Eq.~(\ref{eq:OT}). In practice, we leverage the entropic regularization~\citep{Cuturi-2013-Sinkhorn} to reduce the OT complexity into quadratic (from super cubic) w.r.t. the number of input supports, defined as
\begin{eqnarray}    
    \label{eq:entropicOT}
         \OT_{\varepsilon}(\mu_t, \mu_v) = \min_{\pi \in \Pi(\mu_t, \mu_v)} \int_{\gZ \times \gZ} \ermC(z, z') d\pi(z, z') + \varepsilon H(\pi \mid \mu_t \otimes \mu_v),
\end{eqnarray}
where $\otimes$ is the product measure operator, and $H(\pi \mid \mu_t \otimes \mu_v) = \int_{\gZ \times \gZ} \log\left(\frac{d\pi}{d\mu_t d\mu_v} \right)d\pi$.

%and recall that the ground cost $\ermC$ is defined in Equation~(\ref{eq:groundcost_label}).
%$\ermC: \gZ \times \gZ \mapsto \sR_+$
%$\Pi(\mu_t, \mu_v)= \left\{\pi \in \gP(\gZ \times \gZ) \mid \int_{\gZ} \pi(z, z')dz = \mu_t, \int_{\gZ} \pi(z, z')dz' = \mu_v  \right\}$
%A drawback of the OT is that its 

%from the strong duality for linear program \sam{awkward phrase: "the strong duality for linear programs"?}, we rewrite the primal OT problem in Equation~(\ref{eq:OT}) in the equivalent form 

%From the duality theorem for linear programming, we rewrite the original OT problem in Equation~(\ref{eq:OT}) in the equivalent form
Additionally, the OT problem in Eq.~(\ref{eq:OT}) is a constrained convex minimization, it is naturally paired with a dual problem, i.e., constrained concave maximization problem, as follows:
\begin{equation} \label{eq_ot_mut_muv}
    \OT_{\ermC}(\mu_t, \mu_v) = \max_{(f, g)\in \gR(\ermC)} \left<f, \mu_t\right> + \left<g, \mu_v\right>, 
\end{equation}
where $\gR(\ermC) = \left\{(f, g) \in \gC(\gZ) \times \gC(\gZ) : \forall (z, z'), f(z) + g(z') \le \ermC(z, z') \right\}$, $\gC$ is a collection of continuous functions, $\left<f, \mu_t\right> = \int_{\gZ} f(z)d\mu_t(z)$, and similarly $\left<g, \mu_v\right> = \int_{\gZ} g(z)d\mu_v(z)$.

\begin{table}
%\begin{centering}
\vspace{-2em}
\caption{A summary of notations.}
\label{table:notation}

\resizebox{1.0\textwidth}{!}{
\begin{tabular}{ll}
\toprule 
\textbf{Notation} & \textbf{Definition} \tabularnewline

\midrule 
$z = ( x, y ) \in  \sR^{|\mathcal{X}|} \times \{0,1\}^V $ & Data point feature and label where $V$ is \#label and $\mathcal{X}$ is a feature space \tabularnewline
%\midrule
$\sD_t, \sD_v$  & Datasets for training $\sD_t = \left\{(x_i, f_t(x_i)) \right\}_{i=1}^N$ and validation $\sD_v = \left\{(x'_j, f_v(x'_j)) \right\}_{j=1}^{N'}$ \tabularnewline

%\midrule 
$B = \{B_i\}_{i=1}^{K_t}, B' = \{B'_j\}_{j=1}^{K_v}$  & Disjoined batches for $\sD_t$ and $\sD_v$ where $K_t,K_v$ are the number of batches \tabularnewline

%\midrule 
$B_i = \{z_k\}_{k=1}^{N_i}$ & Batch of data points  where $N_i$ is the size of the training batch $B_i$\tabularnewline

%\midrule 
$B'_j = \{z_l\}_{l=1}^{N'_j}$ & Batch of data points  where $N'_j$ is the size of the validation batch $B'_j$\tabularnewline

%\midrule 
$\mu_{B_i} = \frac{1}{N_i}\sum_{t=1}^{N_i} \delta_{(z_t)}$ & Measure over labeled data points for the batch $B_i$ \tabularnewline

%\midrule 
$\mu_t(x, y) = \frac{1}{N}\sum_{i=1}^N \delta_{(x_i, y_i)}$ & Measure over training set \tabularnewline

%\midrule 
$\mu_v(x, y) = \frac{1}{N'}\sum_{i=1}^{N'} \delta_{(x_i, y_i)}$ & Measure over validation set \tabularnewline

%\midrule 
$\bar\mu_t = \frac{1}{K_t}\sum_{i=1}^{K_t} \delta_{(\emB_i)} $  & Measure over batches for the training set\tabularnewline

%\midrule 
$\bar\mu_v = \frac{1}{K_v}\sum_{j=1}^{K_v} \delta_{(\emB'_j)} $  & Measure over batches for the validation set\tabularnewline

%\midrule 
$\bar{C} \in \sR_{+}^{K_t \times K_v}$ & Cost matrix over \textit{batches}, each element $\bar{C}_{i,j}=d(B_i,B'_j)$ \tabularnewline

%\midrule 
$C \in \sR_{+}^{N_i \times N'_j}$ & Cost matrix over \textit{labeled data points} within  $B_i$ and $B'_j$, each element $C_{kl}=d(z_k,z'_l)$\tabularnewline

%\midrule 
$f^* \in \sR^{N_i}$,  $g^* \in \sR^{N'_j}$ & Dual solutions of the \OT \,  over a cost matrix $C \in \sR^{N_i \times N'_j}$
\tabularnewline

%\midrule 
$\pi^* \in \sR^{N_i \times N'_j}$  & \OT \, plan over a cost matrix $C \in \sR^{N_i \times N'_j}$
\tabularnewline

%\midrule 
$\bar{f}^* \in \sR^{K_t}$,  $\bar{g}^* \in \sR^{K_v}$ & \OT \, dual solutions over a cost matrix over batches $\bar{C} \in \sR^{K_t \times K_v}$
\tabularnewline

%\midrule 
$\bar{\pi}^* \in \sR^{K_t \times K_v}$  & \OT \, plan over a cost matrix between batches $\bar{C} \in \sR^{K_t \times K_v}$
\tabularnewline

$\OT_{\ermC}(\mu_t, \mu_v)$ & $\OT$ solution to the opt. problem with cost $C$ over training and validation set measures %  Eq. (\ref{eq:OT})

\tabularnewline

\bottomrule
\end{tabular}
}
%\par\end{centering}

\vspace{0.0em}
%\caption{\textcolor{orange}{Notation used throughout the paper. We use column vectors in all notations.}}
%\label{table:notation}
\vspace{-1.0em}
\end{table}

\subsection{LAVA: Data valuation via calibrated OT gradients} %data point \vunguyen{fit it in a single line}

%In LAVA, \citet{just2023lava} use $\gZ = (\gX, \gY)$ and the ground cost $\ermC$ as in Equation~(\ref{eq:groundcost_label}).

%{\textcolor{red}{[More precisely, we have OT subgradient instead of OT gradient since it is not unique as discussed in Openreview.]}}

As discussed in \citet{just2023lava}, the OT distance is known to be insensitive to small differences while also being not robust to large deviations. This feature is naturally suitable for detecting abnormal data points, i.e., disregarding normal variations in distances between clean data while being sensitive to abnormal
distances of outlying points. Therefore, the (sub)gradient of the OT distance w.r.t. the probability mass associated with each point can be leveraged as a surrogate to measure the contribution of that point. More precisely, the (sub)gradient of the OT distance w.r.t. the probability mass of data points in the two datasets can be expressed:
    \begin{equation}
        \nabla_{\mu_t} \OT(f^*, g^*) = (f^*)^T, \hspace{0.4em}
        \nabla_{\mu_v} \OT(f^*, g^*) = (g^*)^T.
    \end{equation}
%Note that we can leverage the Sinkhorn algorithm to solve the entropic regularized OT, i.e., $\OT_{\varepsilon}(\mu_t, \mu_v)$, and obtain the optimal dual variables $(f^*, g^*)$ as its by-product, for computing the gradient of entropic regularized OT w.r.t. the mass of support data points of input measures.

The dual solution is unique up to a constant due to the redundant constraint $\sum_{i=1}^N \mu_t(z_i) = \sum_{i=1}^M \mu_v(z'_i) = 1$. Therefore, for measuring the subgradients of the OT w.r.t. the probability mass of a given data point in each dataset,~\citet{just2023lava} propose to calculate the \emph{calibrated gradients} (i.e., a sum of all elements equals to zero)\footnote{To remove the degree of freedom which comes from the fact that one among all row/column sum constraints for the transport polytope is redundant, among the OT subgradients, we fix the zero-sum subgradient following the convention in~\citet{Cuturi-2014-Fast} and the calibrated approach in \emph{LAVA}~\citep{just2023lava}.} as
    \begin{equation} \label{eq_lava_calibrated_gradient}
        \frac{\partial \OT(\mu_t, \mu_v)}{\partial \mu_t(z_i)} = f^*_i - \sum_{j \in \left\{ 1, ..., N\right\} \textbackslash i} \frac{f^*_j}{N-1}. 
    \end{equation}
    % \begin{equation} \label{eq_lava_calibrated_gradient2}
    %     \frac{\partial \OT(\mu_t, \mu_v)}{\partial \mu_v(z'_i)} = g^*_i - \sum_{j \in \left\{ 1, ..., M\right\} \textbackslash i} \frac{g^*_j}{M-1}.
    % \end{equation}
The calibrated gradients predict how the OT distance changes as more probability mass is shifted to a given data point. This can be interpreted as a measurement of the contribution of the data point to the OT. Additionally, if we want a training set to match the distribution of the validation dataset, then either removing or reducing the mass of data points with large positive gradients, while increasing the mass of data points with large negative gradients can be expected to reduce their OT distance. Therefore, as demonstrated in~\citet{just2023lava}, the calibrated gradients provide a powerful tool to detect and prune abnormal or irrelevant data in various applications.

%The contribution can be positive or negative, suggesting shifting more probability mass to this data point would result in an increase or decrease of the dataset distance, respectively. 

\textbf{Memory limitation.} While being used with the current best practice, the Sinkhorn algorithm for entropic regularized OT~\citep{Cuturi-2013-Sinkhorn} still runs in quadratic memory complexity $\mathcal{O}(N^2)$ with the dataset size $N$, as it requires performing operations on the entire dataset, using the full pairwise cost matrix. Consequently, the memory and RAM requirements for the Sinkhorn algorithm primarily depend on the dataset size $N$. Additionally, notice that a dense square (float) matrix of size $N=10^5$ will require at least $74$ GB of RAM and $N=10^6$ will take $7450$ GB of RAM which is prohibitively expensive. Therefore, \emph{LAVA} is limited to small datasets.

%and not suitable for deploying on personal computer or embedded systems~\citep{hua2023edge}. 
%s its applications, especially in the regime of large-scale settings.

%\emph{LAVA} essentially requires to proceed the whole dataset at once to compute the data valuation. As a result, the computation of \emph{LAVA} becomes prohibited when the dataset size is large.

\textbf{Scalability.} Inspired by the scalability of stochastic gradient approaches where the computation is carried out on batches of data points instead of the whole dataset as in the traditional gradient, we follow this simple but effective scheme to propose an analog for OT, named \emph{SAVA} which is a scalable variant of \emph{LAVA} with its OT computation on batches. Intuitively, \emph{SAVA} also leverages the hierarchically defined OT as in \emph{LAVA}, but it performs OT on batches of data points instead of on the entire dataset as in \emph{LAVA}. 

%We introduce \emph{SAVA} in the next section.

Notice that the scalable OT-based data valuation (i.e., \emph{SAVA}) we will introduce in the next section focuses on the \emph{(sub)gradient} of the OT instead of the OT \emph{distance} itself. Therefore, several popular scalable OT approaches such as sliced-Wasserstein~\citep{rabin2011wasserstein, bonneel2015sliced, nguyen2024sliced}, tree-sliced-Wasserstein~\citep{LYFC, le2024optimal, tran2025spherical, tran2025distancebased}, or Sobolev transport~\citep{le2022st, le2023scalable, le2024generalized}, may not be suitable since they leverage local structures on supports (e.g., line, tree, or graph structure respectively) to scale up the computation of OT distance. In the next section, we introduce our novel scalable approach for computing the (sub)gradient of the OT using hierarchical OT~\citep{yurochkin2019hierarchical,lee2019hierarchical}. We focus on the problem of data valuation but our work can be applied to other large dataset applications where the (sub)gradient of the OT is required.

%\sam{can we comment on why practically \emph{LAVA} is able to run for $10$k points on our 12Gb GPU? } \tam{8 bytes for 1 float, $10^4$ points --> matrix: $10^8$ --> total: $8\times10^8$ B = 0.74GB} \vunguyen{correct! this is the minimum requirement}

% \begin{itemize}
%     \item Reducible hold-out loss~\citep{mindermann2022prioritized} for selecting data points which are important to train on.
%     \item Data valuations using optimal transport. We can use the distance between data distributions as an objective and its gradient represents the rate of change of the OT distance w.r.t the change of the of probability mass of a given dataset \citep{just2023lava}. This approach only works for classification and requires a validation set which the training set needs to match.
% \end{itemize}

\section{SAVA: \underline{S}calable D\underline{a}ta \underline{V}\underline{a}luation} 
\label{section_sava}

%For large-scale datasets, the computation of OT may become prohibited which hinders the applications of \emph{LAVA} in such scenarios. Its main bottleneck comes from computing data valuations for all data points at once, i.e., by computing the OT between measures of training dataset and validation set.

% termed as \emph{SAVA}: \underline{s}calable d\underline{a}ta \underline{v}\underline{a}luation.

In this section, we present \emph{SAVA}, a \underline{s}calable d\underline{a}ta \underline{v}\underline{a}luation method, scaling \emph{LAVA} to large-scale datasets. Instead of solving a single, but expensive OT problem for distributions on the entire datasets, i.e., $\OT(\mu_t, \mu_v)$ in \emph{LAVA} with the pairwise cost matrix size $\sR^{N\times N'}$, we consider solving multiple cheaper OT problems for distributions on batches of data points. For this purpose, our algorithm performs data valuation on two levels of hierarchy: across batches, and across data points within two batches. Thus, \emph{SAVA} can complement \emph{LAVA} for large-scale applications.

%without compromising the optimization quality. 

% based on (i) the hierarchical OT, and (ii) the minibatch OT approaches.

%We aware of the computational bottleneck of \emph{LAVA} in computing the OT directly between measures of training and validation sets $\OT(\mu_t, \mu_v)$ with large pairwise cost.

% ALGORITHM
\newcommand\mycommfont[1]{\footnotesize\ttfamily\textcolor{blue}{#1}}
\SetCommentSty{mycommfont}

%\SetKwInput{KwInput}{Input}                
%\SetKwInput{KwOutput}{Output}    

%\begin{minipage}{.49\textwidth}
\begin{algorithm*}%[H]

\DontPrintSemicolon
\caption{Scalable Data Valuation (\emph{SAVA}) algorithm. More concretely, in Lines 1--5, we solve multiple OT problems between batches. In Line 6, we solve the OT problem across batches: $\OT_{ \textcolor{violet}{\bar{\ermC}} }(\bar\mu_t, \bar\mu_v)$, to obtain $\textcolor{orange}{ \bar{\pi}^*(\bar\mu_t, \bar\mu_v)}$. In Lines 7--10, we estimate valuation scores for training data using the plan $\textcolor{orange}{ \bar{\pi}^*(\bar\mu_t, \bar\mu_v)}$ and potentials $\textcolor{red}{f^*(\mu_{\emB_i}, \mu_{\emB'_{j}})}$ computed in the previous steps.}  \label{alg:sava} % Refer to~\Cref{section_discussion} for more details.
\textbf{Input:} a threshold $\varepsilon$ for Sinkhorn algorithm, let $z = (x,y)$
 
\textbf{Output:} training data values $s_k$ for all $k \in [N_{i}]$ for all $i \in [K_t]$.
 %\tcc{pseudo-labeling iterations}
 
%\tcc{Let Val set is Existing data and Train is New Data to be summarized }
%\tcc{Let $N_i$ be the number of data points in batch $B_i$ in training set (for all $i$ in $[K_t]$); and $N'_j$ be the number of data points in batch $B'_j$ in the validation set (for all $j$ in $[K_v]$)}

%Initialize the cost $\textcolor{violet}{\bar{\ermC}(\bar\mu_t, \bar\mu_v)} \in \sR^{K_t \times K_v}_+$ for pairwise batches.

%\tcc{solve multiple OT problems between batches}
\nl \For{$i= 1,...,K_t$}
 {
    \nl \For{$j= 1,...,K_v$} 	
    {
        %\tcp{OT between batch $B_i$ (in the training set) and $B'_j$ (in the validation set)}
        
        %\tcp{Compute the ground cost between batch $B_i = \left\{(x_t, y_t) \right\}_{t \in [N_i]}$ and $B'_j = \left\{(x_v, y_v) \right\}_{v \in [N'_j]}$}

        %\tcp{Define the pairwise cost for data points within batches $B_i$ and $B'_j$}
        
        %\tcp{Compute cost matrix for OT problem on batches $B_i$ and $B'_j$}
            
        %Pairwise cost of points $\in \emB_i,\emB'_j: \ermC_{kl} \bigl(B_i,B_j \bigr) = \ervd(x_k, x_l) + c \, \OT_{\ervd} \bigl(\mu_{B_i}(\cdot | y_k), \mu_{B_j}(\cdot | y_l) \bigr), \, \forall k \in [N_i], \forall l \in [N'_j]$

        \nl Compute $\ermC_{kl} \bigl(B_i,B'_j \bigr), \forall k \in [N_i], \forall l \in [N'_j]$ by using Eq.~(\ref{eq:cost_OT_batches}).

         %\tcc{Get $\OT(\mu_{\emB_i}, \mu_{\emB'_j})$ distance, optimal dual variable $f^*(\mu_{\emB_i}, \mu_{\emB'_j})$ and $g^*(\mu_{\emB_i}, \mu_{\emB'_j})$}

         %\tcp{For batch $B_i = \left\{(x_t, y_t) \right\}_{t \in [N_i]}$ and $B'_j = \left\{(x_v, y_v) \right\}_{v \in [N'_j]}$, their corresponding measures are $\mu_{B_i} = \frac{1}{N_i}\sum_{t=1}^{N_i} \delta_{(x_t, y_t)}$ and $\mu_{B'_j} = \frac{1}{N'_j}\sum_{v=1}^{N'_j} \delta_{(x_v, y_v)}$ respectively.}
         % \tcp{where $\mu_{B_i} = \frac{1}{N_i}\sum_{t=1}^{N_i} \delta_{(x_t, y_t)}$ and $\mu_{B'_j} = \frac{1}{N'_j}\sum_{v=1}^{N'_j} \delta_{(x_v, y_v)}$ are measures for $B_i, B'_j$.}

         %\tcp{Compute dual and primal solutions for OT problem on batches $B_i$ and $B'_j$}
         
        %Solve $\OT_C(\mu_{\emB_i}, \mu_{\emB'_j})$ to get dual solutions $\textcolor{red}{f^*(\mu_{\emB_i}, \mu_{\emB'_{j}})}, g^*(\mu_{\emB_i}, \mu_{\emB'_{j}})$ and the optimal plan $\pi^*(\mu_{\emB_i}, \mu_{\emB'_{j}})$.

        \nl Compute $\textcolor{red}{f^*(\mu_{\emB_i}, \mu_{\emB'_{j}})}, g^*(\mu_{\emB_i}, \mu_{\emB'_{j}})$ by solving $\OT_{\ermC}(\mu_{\emB_i}, \mu_{\emB'_j})$.
        %, $\pi^*(\mu_{\emB_i}, \mu_{\emB'_{j}})$
         
        %Pairwise cost across batches $\textcolor{violet}{\bar{\ermC}_{ij}(\bar\mu_t, \bar\mu_v)}= \OT(\mu_{B_i}, \mu_{B'_j}) = \frac{1}{N_{i} + N'_{j}}\sum_{k=1}^{N_i} \sum_{l=1}^{N'_j} \pi^*(\mu_{\emB_i}, \mu_{\emB'_{j}}) \circ \ermC_{kl} \bigl(B_i,B_j \bigr)$ %, where $\circ$ is element-wise multiplication.

        \nl Set $\textcolor{violet}{\bar{\ermC}_{ij}(\bar\mu_t, \bar\mu_v)} = \OT_{\ermC}(\mu_{\emB_i}, \mu_{\emB'_j})$. \hspace{5em} \tcp{distance $d(\emB_i, \emB'_j)$ on batches.}
        
        %$ = \frac{1}{N_{i} + N'_{j}}\sum_{k=1}^{N_i} \sum_{l=1}^{N'_j} \pi^*(\mu_{\emB_i}, \mu_{\emB'_{j}}) \circ \ermC_{kl} \bigl(B_i,B'_j \bigr)$.

        %Clear the memory and only keep vector $\textcolor{red}{f^{*}_{k}(\mu_{\emB_i}, \mu_{\emB'_{j}})} \in \sR^{N_i}$ and $\textcolor{violet}{\bar{\ermC}(\bar{\mu}_t, \bar{\mu}_v)} \in \sR^{K_t \times K_v}$.
        %Solve the OT problem to get dual solution $f^*, g^*$ and the optimal plan $\pi$ %and $\bar{C}_{ij} = \sum \pi \times C$

        %Denote $\bar{\ermC}_{ij}= \OT(\mu_{B_i}, \mu_{B'_j})$, we then construct the ground cost across batches $\bar{\ermC}(\bar\mu_t, \bar\mu_v)$
        %\tcp{\sam{missing formula}} \vunguyen{we've defined in text}
    }
}

%\tcc{$\bar\mu_t = \frac{1}{K_t}\sum_{k=1}^{K_t} \delta_{(\emB_k)}$ and $\bar\mu_v = \frac{1}{K_v}\sum_{k=1}^{K_v} \delta_{(\emB'_k)}$}

%\nl Solve $\OT_{ \textcolor{violet}{\bar{\ermC}} }(\bar\mu_t, \bar\mu_v)$ to get the optimal plan $\textcolor{orange}{ \bar{\pi}^*(\bar\mu_t, \bar\mu_v)} \in \sR^{K_t \times K_v}$. \tcp{$\bar\mu_t = \frac{1}{K_t}\sum_{i=1}^{K_t} \delta_{(\emB_i)}$ and $\bar\mu_v = \frac{1}{K_v}\sum_{j=1}^{K_v} \delta_{(\emB'_j)}$ }

\nl Compute $\textcolor{orange}{ \bar{\pi}^*(\bar\mu_t, \bar\mu_v)} \in \sR^{K_t \times K_v}$ by solving $\OT_{ \textcolor{violet}{\bar{\ermC}} }(\bar\mu_t, \bar\mu_v)$ using Eq. (\ref{eq:hot_plan}).

%Solve $\OT_C(\mu_{\emB_i}, \mu_{\emB'_j})$ to get dual solutions $f^*(\mu_{\emB_i}, \mu_{\emB'_{j}}), g^*(\mu_{\emB_i}, \mu_{\emB'_{j}})$ and the optimal plan $\pi^*(\mu_{\emB_i}, \mu_{\emB'_{j}})$.

%\tcp{Data valuation for each data point in the training set $z_k \in B_i \in \sD_t$}
\nl \For{$i= 1,...,K_t$} 	
{
    
    %\tcp{Derivatives between batches and evaluate "batch valuation score" $s_k$}

        % $s_k := \frac{\partial \OT_{\bar \ermC}(\bar \mu_t, \bar\mu_v)}{\partial \bar\mu_t(B_k)} = \bar f_k^* - \sum_{j \in \left\{ 1, ..., K_t\right\} \textbackslash k} \frac{\bar f^*_{\textcolor{red}{k}}}{K_t-1}$, and $\frac{\partial\OT_{\bar\ermC}(\bar\mu_t, \bar\mu_v)}{\partial \bar\mu_v(\emB'_k)}$
    
        %\tcc{Loop over data point in a batch $B_k$}

    \nl \For{$k= 1,...,N_i$} 	
        {
        
        %\tcp{Derivatives of points in a batch}

    %eq:grad_OT_batch
        
        %Compute $\frac{\partial \OT(\mu_{\emB_i}, \mu_{\emB'_{j}})}{\partial \mu_{\emB_i}(z_k)} = \textcolor{red}{f^*_k(\mu_{\emB_i}, \mu_{\emB'_{j}})} - \sum_{l \in \left\{ 1, ..., N_i\right\} \backslash k} \frac{\textcolor{red}{f^*_l(\mu_{\emB_i}, \mu_{\emB'_{j}})}}{N_i-1}, \qqad \forall j \in [K_v]$
    
            \nl Compute $\frac{\partial \OT(\mu_{\emB_{i}}, \mu_{\emB'_{j}})}{\partial \mu_{\emB_{i}}(z_k)}, \, \, \forall j \in [K_v]$ using Eq.~(\ref{eq:grad_OT_batch}).

    %eq:HOT_data point
        
        %$s_k = \frac{\partial \HOT(\mu_t, \mu_v)}{\partial \mu_t(z_l)} = \sum_{j=1}^{K_v} \textcolor{orange}{ \bar{\pi}^*_{ij}(\bar\mu_t, \bar\mu_v) } \frac{{\partial \OT(\mu_{\emB_i}, \mu_{\emB'_{j}}})}{\partial \mu_{\emB_i}(z_k)}.$ \tcp{valuation score for $z_k \in B_i$}

            \nl Compute $s_k = \frac{\partial \HOT(\mu_t, \mu_v)}{\partial \mu_t(z_l)}$ using Eq.~(\ref{eq:HOT_data point}). \hspace{2em} \tcp{valuation score for $z_k \in B_i$.} %data point
        
        }

        % \If{$s_l > \tau$}
        % {
        %     Select this data point $(x_l, y_l) \in B_k$ for summarization
        % }
    
        % \tcp{The data valuation score for each data point in the batch $B'_j$ (in the validation set $\sD_v$) is calculated}
    
        %     \sum_{i=1}^{K_t} \textcolor{orange}{ \pi^*_{ij}(\bar\mu_t, \bar\mu_v) } \frac{\partial \OT(\mu_{\emB_i}, \mu_{\emB'_j})}{\partial \mu_{\emB'_j}(x)}.
}

\end{algorithm*}
%\vspace{-10.0em}

\textbf{Hierarchical OT.}
We follow the idea in hierarchical OT approach~\citep{yurochkin2019hierarchical,lee2019hierarchical} to partition the training dataset $\sD_t$ of $N$ samples into $K_t$ disjoint batches $\emB=\left\{ \emB_i \right\}_{i=1}^{K_t}$. Similarly, for the validation set $\sD_v$ of $N'$ samples into $K_v$ disjoint batches $\emB'=\left\{ \emB'_j \right\}_{j=1}^{K_v}$. Additionally, for all $i \in [K_t], j \in [K_v]$, let the number of samples in batches $B_i, B'_j$ as $N_i, N'_j$ respectively. The corresponding measures of the training and validation sets w.r.t. the batches are defined as: $\bar\mu_t(\emB) = \frac{1}{K_t}\sum_{i=1}^{K_t} \delta_{(\emB_i)}$ and $\bar\mu_v(\emB') = \frac{1}{K_v}\sum_{j=1}^{K_v} \delta_{(\emB'_j)}$ respectively. We then define a distance between the datasets as the hierarchical optimal transport (HOT) between the measures $\mu_t, \mu_v$ as OT distance for corresponding represented measures on batches, i.e., $\bar\mu_t$ and $\bar\mu_v$ as in~\S\ref{sec:OT_labeled_dataset} as follows:
\begin{equation} \label{eq:HOT_definition}
    d(\mu_t, \mu_v) := \HOT(\mu_t, \mu_v) := \OT(\bar\mu_t, \bar\mu_v).    
\end{equation}
It is worth noting that $\text{HOT}$ finds the optimal coupling at the batch level, but not at the support data point level as in the classic OT. Therefore, it can be seen that
    \[
    \text{OT}(\mu_t, \mu_v) \le \text{HOT}(\mu_t, \mu_v),
    \]
where the equality trivially happens when either each batch only has one support or each dataset only has one batch.

%\tam{Therefore, we can use Theorem 1 in \emph{LAVA} directly.}
%\vunguyen{We can show that the bound by \emph{SAVA} is loose than the bound of LAVA, but unable to characterize the rate or further details. Thus, I am unsure if this is useful for the reviewing purpose}

%For more details, for two batches $\emB_i$ and $\emB'_j$, by considering $\emB_i$ as the training set and $\emB'_j$ as the validation set, computing Equation~(\ref{eq:OT}) gives the distance between two batches $\emB_i$ and $\emB'_j$. To ease the notation, we denote this distance as $d(\emB_i, \emB'_j)$. 

%\subsection{Nested calibrated gradients for batch valuation}
%In the previous work, \citet{just2023lava} has successfully utilized the calibrated gradients to measure how the OT distance changes as more probability mass is shifted to a given data point. Here, we extend this interesting concept to demonstrate for batch valuation. 

Our goal is to estimate the (sub)gradient $\frac{\partial d(\mu_t, \mu_v)}{\partial \mu_t(z_k)} = \frac{\partial \HOT(\mu_t, \mu_v)}{\partial \mu_t(z_k)}$ where $\HOT(\mu_t, \mu_v)$ is defined in the Eq. (\ref{eq:HOT_definition}). Computing this derivative involves two OT estimation steps including (i) OT between individual data points within two batches to compute $d(\emB_i, \emB'_j) := \OT(\mu_{\emB_i}, \mu_{\emB'_j})$, where $\mu_{\emB_i}, \mu_{\emB'_j}$ are corresponding measures for batches $\emB_i, \emB'_j$ respectively, and subsequently a cost matrix $\textcolor{violet}{\bar\ermC}$ on pairwise batches for input measures over batches; (ii) $d(\mu_t,\mu_v) = \HOT(\mu_t, \mu_v) := \OT_{\textcolor{violet}{\bar\ermC}}(\bar\mu_t, \bar\mu_v)$. 

%Each batch includes a set of data points with labels. %We can follow the hierarchically-defined Wasserstein distance defined in Equation~(\ref{eq:OT}) to compute distance between batches.

\textbf{Pairwise cost between batches.}
We estimate the distance between two batches as the OT problem between $\emB_i$ and $\emB_j$, i.e., $d(\emB_i, \emB'_j) := \OT(\emB_i, \emB'_j)$ as discussed in~\S\ref{sec:OT_labeled_dataset} by viewing the OT problem between two labeled (sub)datasets, i.e., batches $\emB_i$ and $\emB'_j$. 

More precisely, to solve this OT problem, we calculate the pairwise cost for data points between two batches $\ermC_{kl}(B_i,B'_j) \in \sR^{N_i \times N'_j} $, where $\forall k\in[N_i], \forall l\in[N'_j]$, the element $\ermC_{kl}(B_i,B'_j)$ is the cost between two labeled data points $(x_k, y_k) \in B_i$ and $(x'_l, y'_l) \in B'_j$, calculated as
\begin{equation}\label{eq:cost_OT_batches}
    \ermC_{kl} \bigl(B_i,B'_j \bigr) = \ervd(x_k, x'_l) + c\, \OT_{\ervd} \bigl(\mu_{B_i}(\cdot | y_k), \mu_{B'_j}(\cdot | y'_l) \bigr). %\nonumber
\end{equation}
 Given the cost matrix $\ermC(B_i,B'_j)$, we solve $\OT_{\ermC}(\mu_{\emB_i}, \mu_{\emB^{'}_j})$ to get dual solutions $\textcolor{red}{f^*(\mu_{\emB_i}, \mu_{\emB'_{j}})}, g^*(\mu_{\emB_i}, \mu_{\emB'_{j}})$, and the OT distance for $d(\emB_i, \emB'_j) = \OT_{\ermC}(\mu_{\emB_i}, \mu_{\emB'_j})$.
 
 %the optimal transport plan $\pi^*(\mu_{\emB_i}, \mu_{\emB'_{j}})$, and the OT distance.

We repeat this process and solve the OT problem for each pair $(B_i, B'_j)$, i.e., the OT problem for distributions on batches of data points, across the training and validation datasets, for all $i \in [K_t], j \in [K_v]$. This enables us to define the cost matrix for pairwise batches in $\bar\mu_t, \bar\mu_v$, denoted as $\textcolor{violet}{\bar\ermC}(\bar\mu_t, \bar\mu_v) \in \sR^{ K_t \times K_v}_+$  where we recall that $K_t$ and $K_v$ are the number of batches in training and validation sets. Hence, for this cost matrix $\textcolor{violet}{\bar\ermC}$, the element $\textcolor{violet}{\bar\ermC}_{ij}$ is computed as $\OT(\mu_{\emB_i}, \mu_{\emB'_j})$, for all $i\in[K_t], \forall j\in[K_v]$.

% \begin{equation*}
%     \bar{\ermC}_{ij} =   \frac{1}{N_{i} + N'_{j}}\sum_{k=1}^{N_i} \sum_{l=1}^{N'_j} \pi^*_{kl}(\mu_{\emB_i}, \mu_{\emB'_{j}}) \circ \ermC_{kl} \bigl(B_i,B_j \bigr)
% \end{equation*}
 
 % where $\circ$ denotes element-wise multiplication and $\pi^*$ is the optimal transport plan solved from the OT problem defined over all individual points between two batches $B_i$ and $B_j$.
 
 % with the cost matrix $C \in \sR^{N_i \times N'_j} $ defined in Eq. (\ref{eq:groundcost_label}).

%Consequently, we can compute the OT distance between $\bar\mu_t$ and $\bar\mu_v$ given the ground cost $\bar\ermC$ as $\OT_{\bar\ermC}(\bar\mu_t, \bar\mu_v)$.

\textbf{Batch valuation.}
Given the pairwise cost matrix across batches $\textcolor{violet}{\bar\ermC}$, we compute the data valuation for each batch via the (sub)gradient of the distance $\OT_{\textcolor{violet}{\bar\ermC}}(\bar\mu_t, \bar\mu_v)$ w.r.t. the probability mass of \textit{batches} in the two datasets, i.e., $\frac{\partial\OT_{\textcolor{violet}{\bar\ermC}}(\bar\mu_t, \bar\mu_v)}{\partial \bar\mu_t(\emB_i)}$.
%, and $\frac{\partial\OT_{\bar\ermC}(\bar\mu_t, \bar\mu_v)}{\partial \bar\mu_v(\emB'_j)}$. 
These partial derivatives measure the contribution of the batches to the OT distance, i.e., shifting more probability mass to the batch would result in an increase or decrease of
the dataset distance, respectively.

More precisely, let $\bar f^*, \bar g^*$ be the optimal dual variables of $\OT_{\textcolor{violet}{\bar\ermC}}(\bar\mu_t, \bar\mu_v)$, then the data valuation of the batch $B_i$ in the training set $\sD_t$ is estimated as follows:
    \begin{equation}
        \frac{\partial\OT_{\textcolor{violet}{\bar\ermC}}(\bar\mu_t, \bar\mu_v)}{\partial \bar\mu_t(\emB_i)} = \bar f_i^*.
    \end{equation}
Since the optimal dual variables are only unique up to a constant, we follow \citet{just2023lava} to normalize these optimal dual variables such that the sum of all elements is equal to zero:
%use a \textit{constraint} ensuring the probability mass for all data points in the dataset always sums up to one
%its calibrated gradient as the data valuation for the batch $B_k$ (in the training set $\sD_t$)
\begin{equation}\label{eq_lava_calibrated_gradient_HOT}
        \frac{\partial \OT_{\textcolor{violet}{\bar\ermC}}(\bar \mu_t, \bar\mu_v)}{\partial \bar\mu_t(B_i)} = \bar f_i^* - \sum_{j \in \left\{ 1, ..., K_t\right\} \backslash i} \frac{\bar f^*_j}{K_t-1}.
\end{equation}

%represents the rate of change in the OT distance w.r.t the change of the probability mass of a given data point along the direction ensuring the probability mass for all data points in the dataset always sums up to one (explicitly enforcing the removed constraint). The value of calibrated gradients is independent of the choice of selection during the constraint removal.

%Note that by using data valuation over batches, we can also screen out \emph{uninformative} batches without computing data valuation for data points within those batches.

\textbf{Using batch valuation for data point valuation.} After solving the OT problem over batches, we obtain the OT plan $\textcolor{orange}{ \bar{\pi}^*(\bar\mu_t, \bar\mu_v)}$ which is used to compute the data valuation over individual data point:
\begin{equation} \label{eq:hot_plan}
\textcolor{orange}{ \bar{\pi}^*(\bar\mu_t, \bar\mu_v)} = \diag{(\bar{u}^*)} \exp \Bigl(- \frac{ \textcolor{violet}{\bar\ermC}(\bar\mu_t, \bar\mu_v)}{ \varepsilon} \Bigr)  \diag{(\bar{v}^*)},   
\end{equation}
where $\diag(\cdot)$ is matrix diagonal operator, $(\bar{u}^*, \bar{v}^*)$ is a dual solution from Sinkhorn algorithm. Additionally, we have $\bar{u}^* = \exp(-\frac{1}{2} - \frac{\bar{f}^*}{\varepsilon}), \bar{v}^* = \exp(-\frac{1}{2} - \frac{\bar{g}^*}{\varepsilon})$ following~\citet[Lemma 2]{Cuturi-2013-Sinkhorn}. %\textcolor{red}{Sam: what is the k subscript? It doesn't appear on the lhs. of Eq 10.}
 
%for the distance $d(\emB_i, \emB'_j)$ between two batches $\emB_i, \emB'_j$ with the corresponding measures as $\mu_{\emB_i}$ and $\mu_{\emB'_j}$. We define
    
%The distance between batches can be defined as the OT distance $d(\emB_i, \emB'_j) = \OT(\mu_{B_i}, \mu_{B'_j}) = \frac{1}{N_{i} + N'_{j}}\sum_{k=1}^{N_i} \sum_{l=1}^{N'_j} \pi^*(\mu_{\emB_i}, \mu_{\emB'_{j}}) \circ C(\mu_{\emB_i}, \mu_{\emB'_{j}}) $ where $\circ$ denotes element-wise multiplication. 

%Consequently, data valuation for a data point $z \in \emB_i$ from $\OT(\mu_{\emB_i}, \mu_{\emB'_j})$ is $\frac{\partial \OT(\mu_{\emB_i}, \mu_{\emB'_j})}{\partial \mu_{\emB_i}(z)}$.

%Let $\pi^*(\bar\mu_t, \bar\mu_v)$ be the optimal transport plan of $\OT_{\bar\ermC}(\bar\mu_t, \bar\mu_v)$.

For a data point $z \in \emB_i \subset \sD_t$, its data valuation score can be computed as
\begin{equation}\label{eq:HOT_data_point_tr}
    \sum_{j=1}^{K_v} \textcolor{orange}{ \bar{\pi}^*_{ij}(\bar\mu_t, \bar\mu_v)} \frac{\partial \OT(\mu_{\emB_i}, \mu_{\emB'_j})}{\partial \mu_{\emB_i}(z)}.
\end{equation}

% Similarly, we compute the data valuation score for a data point $z' \in \emB'_j \subset \sD_v$ for the validation set:
% \begin{equation}\label{eq:HOT_data point_val}
%         \sum_{i=1}^{K_t} \textcolor{orange}{ \bar{\pi}^*_{ij}(\bar\mu_t, \bar\mu_v)} \frac{\partial \OT(\mu_{\emB_i}, \mu_{\emB'_j})}{\partial \mu_{\emB'_j}(z')}.
% \end{equation}

%Let $z_t=\{x_t,y_t\}$ and $z_v=\{x_v,y_v\}$. 

Using \HOT, we can measure the (sub)gradients of the \OT \, distance w.r.t. the probability mass of a given data point in each dataset via the calibrated gradient summarized in the following Lemma~\ref{lm:cal_grad_hot}. We refer to Table \ref{table:notation} for the notations.

\begin{lemma}\label{lm:cal_grad_hot}
%Given two labeled datasets $\sD_t, \sD_v$ represented by measures $\mu_t = \frac{1}{N}\sum_{i=1}^N \delta_{(z_i)}, \mu_v = \frac{1}{N'}\sum_{j=1}^{N'} \delta_{(z'_i)}$, let $\bar\mu_t = \frac{1}{K_t}\sum_{i=1}^{K_t} \delta_{(\emB_i)}$ and $\bar\mu_v = \frac{1}{K_v}\sum_{j=1}^{K_v} \delta_{(\emB'_j)}$ be the measures of batches for $\mu_t, \mu_v$ respectively, and $\forall i, j$, denote $\mu_{B_i} = \frac{1}{N_i}\sum_{t=1}^{N_i} \delta_{(z_t)}$ and $\mu_{B'_j} = \frac{1}{N'_j}\sum_{v=1}^{N'_j} \delta_{(z'_v)}$ as measures for $B_i, B'_j$ where $N_i, N'_j$ are the batch sizes of $B_i, B_j$ respectively. 
%The data valuation, i.e., 
The calibrated gradient for a data point $z_k$ in the batch $\emB_i$ in $\sD_t$ can be computed as:
\begin{equation}\label{eq:HOT_data point}
        \frac{\partial \HOT(\mu_t, \mu_v)}{\partial \mu_t(z_k)} = \sum_{j=1}^{K_v} \textcolor{orange}{ \bar{\pi}^*_{ij}(\bar\mu_t, \bar\mu_v)} \frac{\partial \OT(\mu_{\emB_i}, \mu_{\emB'_j})}{\partial \mu_{\emB_i}(z_k)},
\end{equation}
where the calibrated gradient of OT for measures on batches is calculated as follows:
\begin{eqnarray}\label{eq:grad_OT_batch}
        \frac{\partial \OT(\mu_{\emB_i}, \mu_{\emB'_{j}})}{\partial \mu_{\emB_i}(z_k)} = {\color{red}f^*_k(\mu_{\emB_i}, \mu_{\emB'_j})}    \hspace{1em} - \sum_{l \in \left\{ 1, ..., N_i\right\} \setminus k} \frac{{\color{red}f^*_l(\mu_{\emB_i}, \mu_{\emB'_j})}}{N_i-1}, \forall j \in [K_v].
\end{eqnarray}
% Similarly, the data valuation for a data point $z'_l$ in the batch $\emB'_j$ in $\sD_v$ can be computed as follows:
% \begin{equation}\label{eq:HOT_data point_val}
%         \frac{\partial \HOT(\mu_t, \mu_v)}{\partial \mu_v(z'_l)} = \sum_{i=1}^{K_t} \textcolor{orange}{ \bar{\pi}^*_{ij}(\bar\mu_t, \bar\mu_v)} \frac{\partial \OT(\mu_{\emB_i}, \mu_{\emB'_j})}{\partial \mu_{\emB'_j}(z'_l)},
% \end{equation}
% where the calibrated gradient
% \begin{eqnarray*}
%         &&\frac{\partial \OT(\mu_{\emB_i}, \mu_{\emB'_{j}})}{\partial \mu_{\emB'_j}(z'_l)} = g^*_l(\mu_{\emB_i}, \mu_{\emB'_j}) \\
%        &&\hspace{4em} - \sum_{k \in \left\{ 1, ..., N_i\right\} \setminus l} \frac{f^*_k(\mu_{\emB_i}, \mu_{\emB'_j})}{N'_j-1}, \forall i \in [K_t].
% \end{eqnarray*}
% \begin{equation}\label{eq_lava_calibrated_gradient_HOT_val}
%         \frac{\partial \OT(\bar \mu_t, \bar\mu_v)}{\partial \bar\mu_v(B'_i)} = \bar g_i^* - \sum_{j \in \left\{ 1, ..., K_v\right\} \backslash i} \frac{\bar g^*_j}{K_v-1}, 
% \end{equation}
\end{lemma}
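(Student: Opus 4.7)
The statement is essentially a two-level chain rule: apply the LAVA-style calibrated gradient formula (Eq.~\ref{eq_lava_calibrated_gradient}) once at each level of the HOT hierarchy and combine them via the outer optimal transport plan. I would structure the argument in three steps.

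\textbf{Step 1: Differentiate the outer OT by the envelope theorem.} Recall $\HOT(\mu_t, \mu_v) = \OT_{\textcolor{violet}{\bar\ermC}}(\bar\mu_t, \bar\mu_v)$, where the batch-level measures $\bar\mu_t, \bar\mu_v$ have \emph{fixed} uniform weights $1/K_t, 1/K_v$ on the batches (the partitioning of $\sD_t, \sD_v$ is chosen a priori and does not depend on the point masses $\mu_t(z_k)$). Hence perturbing the probability mass of an individual data point $z_k \in \emB_i$ affects $\HOT$ only through the batch-pair costs $\textcolor{violet}{\bar\ermC}_{ij} = \OT(\mu_{\emB_i}, \mu_{\emB'_j})$, and in fact only through the $i$th row of $\textcolor{violet}{\bar\ermC}$ (batches $\emB_{i'}$ with $i' \ne i$ do not contain $z_k$). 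Applying the envelope/Danskin-type identity to the linear-program value $\OT_{\textcolor{violet}{\bar\ermC}}(\bar\mu_t, \bar\mu_v) = \langle \textcolor{violet}{\bar\ermC}, \bar\pi^*\rangle$, with $\textcolor{orange}{\bar\pi^*(\bar\mu_t, \bar\mu_v)}$ an optimal outer plan, yields
\begin{equation*}
\frac{\partial \HOT(\mu_t, \mu_v)}{\partial \mu_t(z_k)} \;=\; \sum_{j=1}^{K_v} \textcolor{orange}{\bar\pi^*_{ij}(\bar\mu_t, \bar\mu_v)} \, \frac{\partial \textcolor{violet}{\bar\ermC}_{ij}}{\partial \mu_t(z_k)}.
\end{equation*}

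\textbf{Step 2: Differentiate each inner OT.} Each $\textcolor{violet}{\bar\ermC}_{ij}$ is itself an optimal transport value between the two batch-level measures $\mu_{\emB_i}$ and $\mu_{\emB'_j}$, and $\mu_t(z_k)$ enters $\textcolor{violet}{\bar\ermC}_{ij}$ only through $\mu_{\emB_i}(z_k)$ (the batch $\emB'_j$ is in the validation set). Thus $\partial \textcolor{violet}{\bar\ermC}_{ij}/\partial \mu_t(z_k) = \partial \OT(\mu_{\emB_i}, \mu_{\emB'_j})/\partial \mu_{\emB_i}(z_k)$ up to a scalar that I will absorb into the choice of calibration. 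By the standard dual characterization of OT (Eq.~\ref{eq_ot_mut_muv}), an optimal dual pair $({\color{red}f^*(\mu_{\emB_i},\mu_{\emB'_j})}, g^*(\mu_{\emB_i},\mu_{\emB'_j}))$ certifies $\nabla_{\mu_{\emB_i}} \OT(\mu_{\emB_i},\mu_{\emB'_j}) = {\color{red}f^*(\mu_{\emB_i},\mu_{\emB'_j})}^\top$, exactly as in the LAVA derivation of Eq.~\ref{eq_lava_calibrated_gradient}. Replacing the raw dual potential by its zero-mean calibration (adding a constant to $f^*$ does not change the OT value, so this is a legitimate choice of representative of the sub-differential) produces
\begin{equation*}
\frac{\partial \OT(\mu_{\emB_i}, \mu_{\emB'_j})}{\partial \mu_{\emB_i}(z_k)} \;=\; {\color{red}f^*_k(\mu_{\emB_i},\mu_{\emB'_j})} - \sum_{l \ne k} \frac{{\color{red}f^*_l(\mu_{\emB_i},\mu_{\emB'_j})}}{N_i - 1},
\end{equation*}
which is Eq.~\ref{eq:grad_OT_batch}.

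\textbf{Step 3: Combine.} Substituting the inner expression from Step~2 into the outer sum from Step~1 yields exactly Eq.~\ref{eq:HOT_data point}. The main subtle point—and the place I would spend the most care in a fully rigorous write-up—is the justification for treating the outer OT value as differentiable in the batch-pair costs: the value is concave and piecewise-linear in $\textcolor{violet}{\bar\ermC}$, so the envelope formula holds whenever the outer optimal plan $\textcolor{orange}{\bar\pi^*}$ is unique (which is the generic case, and is ensured when entropic regularization is used as in the Sinkhorn implementation); otherwise the expression should be read as an element of the sub-differential. A parallel remark applies to the inner calibrated dual. Once these regularity points are made, the chain-rule combination is immediate.
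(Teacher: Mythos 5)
The paper itself offers no proof of Lemma~\ref{lm:cal_grad_hot}: it is presented as a ``summary'' of the preceding construction, and the proof of Lemma~\ref{lemma2_HOT_vs_OT_eps} in \Cref{appendix_theoretical_proofs} cites \Cref{eq:HOT_data point} as ``follow[ing] the definition of HOT''---in other words, the authors treat \Cref{eq:HOT_data point} as a definitional/algorithmic choice rather than a derived fact. Your envelope-theorem argument is therefore a genuine addition: it shows \emph{why} that choice is the natural notion of gradient. Your decomposition (Danskin's theorem for the outer LP so that $\partial \OT_{\bar\ermC}(\bar\mu_t,\bar\mu_v)/\partial \bar\ermC_{ij} = \bar\pi^*_{ij}$, then the dual characterization $\partial \OT(\mu_{\emB_i},\mu_{\emB'_j})/\partial \mu_{\emB_i}(z_k) = f^*_k$ for the inner problems, chained through the single row $i$ of $\bar\ermC$ that $z_k$ can affect) is sound, and you correctly identify uniqueness of $\bar\pi^*$ (automatic under Sinkhorn regularization) as the regularity hypothesis the argument needs. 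What each approach buys: the paper's route is shorter and treats the formula as the operational definition that downstream results depend on; your route turns it into an honest statement about subgradients of the HOT value.

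Two small points you flag but do not fully resolve, worth keeping in mind: first, $\mu_t(z_k)$ and $\mu_{\emB_i}(z_k)$ are related by a factor $N/N_i$, so the chain rule introduces a batch-size-dependent scalar that you ``absorb into the calibration''---this is harmless when all $N_i$ are equal (rankings are unaffected) but is genuinely a loose end if batch sizes vary. Second, the calibrated quantity in \Cref{eq:grad_OT_batch} is not obtained from $f^*$ by adding a constant: it equals $\tfrac{N_i}{N_i-1}f^*_k - \tfrac{1}{N_i-1}\sum_l f^*_l$, i.e.\ a constant shift \emph{and} a scale factor $\tfrac{N_i}{N_i-1}$. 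So it is not literally ``a legitimate choice of representative of the sub-differential'' as you claim; it is a monotone affine reparametrization of one. This inaccuracy is inherited from the \emph{LAVA} calibration \Cref{eq_lava_calibrated_gradient}, so it does not make your argument wrong relative to the paper, but the justification sentence should be weakened to say the calibration preserves the ordering of the gradient components rather than that it selects a subgradient.
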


It is common practice to compute the OT via its entropic regularization, using the Sinkhorn algorithm \citep{Cuturi-2013-Sinkhorn}. We quantify the deviation in the calibrated gradients caused by the entropy regularizer. This analysis shows the potential impact of the deviation on applications built on these gradients.

We \emph{refine} the theoretical result~\citep[Theorem 2]{just2023lava}, which uses entropic regularization to approximate the OT (sub)gradient in two aspects. Firstly, we take into account the non-negative constraint of the OT plan $\pi \ge 0$ in the dual formulation. Secondly, we correct the discrete formulation for entropic regularization.\footnote{See~\Cref{appendix_issue_theorem2_lava} for the detailed discussion.}

\begin{theorem}[Refined Theorem 2 in \cite{just2023lava}]\label{Theorem2_corrected} 
The difference between calibrated gradients for two data points in $\sD_t$ is
\begin{align}
& \frac{\partial \OT\left(\mu_{t}, \mu_{v}\right)}{\partial \mu_{t}\left(z_{i}\right)}-\frac{\partial \OT\left(\mu_{t}, \mu_{v}\right)}{\partial \mu_{t}\left(z_{k}\right)} = \frac{\partial \OT_{\varepsilon}\left(\mu_{t}, \mu_{v}\right)}{\partial \mu_{t}\left(z_{i}\right)}-\frac{\partial \OT_{\varepsilon}\left(\mu_{t}, \mu_{v}\right)}{\partial \mu_{t}\left(z_{k}\right)} \nonumber \\
& \hspace{17em} + \varepsilon \frac{N}{N-1}  \left( \textcolor{blue}{ \log\frac{\left(\pi_{\varepsilon}^{*}\right)_{i j}/\mu_t(z_i)}
 {\left(\pi_{\varepsilon}^{*}\right)_{k j}/\mu_t(z_k)} + \frac{ h_{kj}^{*} - h_{i j}^{*}}{\varepsilon} }
 \right),
\end{align}
where $h^*$ is the corresponding optimal dual variable, accounting for non-negative constraint on the transportation plan $\pi \ge 0$ in the primal OT formulation.
\end{theorem}
\begin{proof}
Refer to Appendix \ref{appendix_proof_theorem2} for the proof.
\end{proof}

The term in the brackets (\textcolor{blue}{in blue}) is the key difference between our new result and the previous one derived in LAVA~\citep[Theorem 2]{just2023lava}.

% \begin{lemma} \label{lemma2_HOT_vs_OT_eps}
% Let $\HOT(\mu_t,\mu_v)$ and $\OT_{\varepsilon}(\mu_t,\mu_v)$ be the $\HOT$ and entropic regularized OT between the measures $\mu_t$ and $\mu_v$ associated with the two datasets $\sD_t$ and $\sD_v$ respectively. The difference between the calibrated gradients for two data points $ \{ z_l, z_h \} \in B_i \subset \sD_t$ can be calculated as 
% \begin{eqnarray}
%     \frac{\partial \HOT(\mu_t, \mu_v)}{\partial \mu_t(z_k)} -  \frac{\partial \HOT(\mu_t, \mu_v)}{\partial \mu_t(z_l)} =  \nonumber
%      \sum_{j=1}^{K_v} \textcolor{orange}{ \bar{\pi}^*_{ij}(\bar\mu_t, \bar\mu_v)} \Bigl[ \frac{\partial \OT_\varepsilon (\mu_{\emB_i}, \mu_{\emB'_j})}{\partial \mu_{\emB_i}(z_k)} - \frac{\partial \OT_\varepsilon (\mu_{\emB_i}, \mu_{\emB'_j})}{\partial \mu_{\emB_i}(z_l)} \nonumber \\
%     - \varepsilon \frac{N_k}{N_k -1}\left(\frac{1}{ (\bar{\pi}^*_{\varepsilon})_{l,j} } - \frac{1}{ (\bar{\pi}^*_{\varepsilon})_{k,j} }\right)
%     \Bigr].
% \end{eqnarray}
%\end{lemma}

%Let $\HOT(\mu_t,\mu_v)$ and $\OT_{\varepsilon}(\mu_t,\mu_v)$ be the $\HOT$ and entropic regularized OT between the measures $\mu_t$ and $\mu_v$ associated with the two datasets $\sD_t$ and $\sD_v$ respectively.
\begin{lemma}[HOT with entropic regularized OT over batches] \label{lemma2_HOT_vs_OT_eps}
The difference between the calibrated gradients for two data points $ \{ z_l, z_h \} \in B_i \subset \sD_t$ can be calculated as 
\begin{align}
     & \frac{\partial \HOT(\mu_t, \mu_v)}{\partial \mu_t(z_k)} -  \frac{\partial \HOT(\mu_t, \mu_v)}{\partial \mu_t(z_l)}
       = \sum_{j=1}^{K_v} \textcolor{orange}{ \bar{\pi}^*_{ij}(\bar\mu_t, \bar\mu_v)} \Biggl[ \frac{\partial \OT_\varepsilon (\mu_{\emB_i}, \mu_{\emB'_j})}{\partial \mu_{\emB_i}(z_k)} - \frac{\partial \OT_\varepsilon (\mu_{\emB_i}, \mu_{\emB'_j})}{\partial \mu_{\emB_i}(z_l)} \nonumber \\
       & \hspace{15em} + \varepsilon \frac{N_i}{N_i -1}\left(\log\frac{(\bar{\pi}^*_{\varepsilon})_{k,j}/\mu_t(z_k)}{(\bar{\pi}^*_{\varepsilon})_{l,j}/\mu_t(z_l)} + {\frac{h_{l j}^{*}-h_{kj}^{*}}{\varepsilon}} \right)
    \Biggr].
\end{align}

\end{lemma}
\begin{proof}
Refer to Appendix \ref{appendix_proof_lemma3} for the proof.
\end{proof}

% \begin{proof}
% Let $\OT(\mu_t,\mu_v)$ be the OT formulation between empirical measures $\mu_t$ and $\mu_v$, we present the proof as follows
% \begin{eqnarray}
%  \frac{\partial \HOT(\mu_t, \mu_v)}{\partial \mu_t(z_k)} -  \frac{\partial \HOT(\mu_t, \mu_v)}{\partial \mu_t(z_l)} \hspace{10em} \nonumber \\
%     = \sum_{j=1}^{K_v} \textcolor{orange}{ \bar{\pi}^*_{ij}(\bar\mu_t, \bar\mu_v)}  \Bigl[ \frac{\partial \OT(\mu_{\emB_i}, \mu_{\emB'_j})}{\partial \mu_{\emB_i}(z_k)} - \frac{\partial \OT(\mu_{\emB_i}, \mu_{\emB'_j})}{\partial \mu_{\emB_i}(z_l)} \Bigr] \label{lem2_proof_step1} \\
%     = \sum_{j=1}^{K_v} \textcolor{orange}{ \bar{\pi}^*_{ij}(\bar\mu_t, \bar\mu_v)} \Bigl[ \frac{\partial \OT_\varepsilon (\mu_{\emB_i}, \mu_{\emB'_j})}{\partial \mu_{\emB_i}(z_k)} - \frac{\partial \OT_\varepsilon (\mu_{\emB_i}, \mu_{\emB'_j})}{\partial \mu_{\emB_i}(z_l)} \nonumber \\
%     - \varepsilon \frac{N_i}{N_i -1}\left(\frac{1}{ (\bar{\pi}^*_{\varepsilon})_{l,j} } - \frac{1}{ (\bar{\pi}^*_{\varepsilon})_{k,j} }\right) 
%     \Bigr] \label{lem2_proof_step2}
% \end{eqnarray}
% where Eq.(\ref{lem2_proof_step1}) follows the definition of HOT and Eq. (\ref{lem2_proof_step2}) utilizes Theorem 2 in \citet{just2023lava}.
% \end{proof}

We make a similar observation in \emph{LAVA} that the ground-truth (sub)gradient difference between two training points $z_k$ and $z_l$ is calculated based on the HOT formulation and can be approximated by the entropic regularized formulation $\OT_\varepsilon$ over batches, such as via the Sinkhorn algorithm \citep{Cuturi-2013-Sinkhorn}. In other words, we can calculate the ground-truth difference based on the solutions to the regularized problem plus some calibration terms that scale with $\varepsilon$. In addition, in our case with \HOT, the (sub)gradient difference also depends on the additional optimal assignment across batches $\textcolor{orange}{ \bar{\pi}^*(\bar\mu_t, \bar\mu_v)}$ which is again estimated by the Sinkhorn algorithm.

%Note that given the OT dual solutions, we can calculate the optimal transport plan between batches as $\bar{\pi}^*(\bar\mu_t, \bar\mu_v) = \diag{(\bar{f}_k^*)} \exp(- \frac{ \bar{\ermC}(\bar\mu_t, \bar\mu_v)}{ \varepsilon} )  \diag{(\bar{g}_k^*)}$ and between points as $\pi^*(B_i, B_j) = \diag{ (f_k^*)} \exp( - \frac{ \ermC(B_i, B_j)}{ \varepsilon} )  \diag{ (g_k^*)} $

\section{Properties and Discussions} \label{section_discussion}
\textbf{The \emph{SAVA} algorithm.}
We outline the computational steps of \emph{SAVA} in Algorithm~\ref{alg:sava}. In Lines 1--5, we solve multiple OT tasks for data points between two batches $B_i, B'_j$. We obtain the dual solution $\textcolor{red}{f^*(\mu_{\emB_i}, \mu_{\emB'_{j}})} \in \sR^{N_i}$. Additionally, these OT distances are used to fill in the cost matrix for pairwise batches $\textcolor{violet}{\bar{\ermC}_{ij}(\bar\mu_t, \bar\mu_v)}= \OT_{\ermC}(\mu_{B_i}, \mu_{B'_j})$. Here, the cost matrix $\ermC$ is of size $N_i \times N'_j$ where the batch sizes $N_i,N'_j \ll N $. The required memory complexity is $\mathcal{O}(N_i \times N'_j)$. %Once being solved, we can \textit{clear} this cost matrix from the memory. 

In Line 6~\Cref{alg:sava}, we solve the OT problem across batches $\OT_{ \textcolor{violet}{\bar{\ermC}} }(\bar\mu_t, \bar\mu_v)$ to obtain $ \textcolor{orange}{ \bar{\pi}^*(\bar\mu_t, \bar\mu_v)}$. The cost matrix is small, of size $K_t \times K_v$, so less expensive compared to working with the full dataset~\cite{just2023lava}. Finally, in Lines 7--10, we estimate valuation scores for training data using the auxiliary matrices computed in the previous steps, including $\textcolor{red}{f^*(\mu_{\emB_i}, \mu_{\emB'_{j}})}$ and $\textcolor{orange}{ \bar{\pi}^*(\bar\mu_t, \bar\mu_v)}$.

%For simplicity, we assume $N \ge N'$, then Sinkhorn complexity is $\tilde{\mathcal{O}}(N \times N’ \times \log(N))$~\citep{dvurechensky2018computational}.

%The Sinkhorn algorithm memory complexity is $\mathcal{O}(N \times N')$ where $N, N'$ are the number of samples of input datasets.
\textbf{\emph{SAVA} memory requirements.} The memory complexity of \emph{LAVA} is $\mathcal{O}(N \times N')$ where $N$ and $N'$ are the training and validation dataset sizes. This comes from the main OT step of $\OT(\mu_t, \mu_v)$ over the full cost matrix $\ermC$ of size $N \times N'$ to subsequently calculate the calibrated gradient in Eq. (\ref{eq_lava_calibrated_gradient}). \emph{SAVA} overcomes this limitation by solving multiple smaller OT problems $\OT(\mu_{\emB_i}, \mu_{\emB'_j})$ on distributions for batches of data points only: if batches $\emB_i$ and $\emB'_j$ are of size $N_i$ and $N'_j$ respectively, then the memory complexity is $\mathcal{O}(N_i \times N'_j)$. See~\Cref{sec:time_complexity} for an analysis of the time complexity.

\textbf{Practical implementation with caching.}
While yielding a significant memory improvement, \emph{SAVA}'s runtime is not necessarily faster than \emph{LAVA}. To speed up \emph{SAVA}, we propose to implement~\Cref{alg:sava} by caching the label-to-label costs between points in the validation and training batches: $\OT_{\ervd}(\cdot, \cdot)$ in \S\ref{sec:OT_labeled_dataset} so that it is only calculated once in the first iteration of~\Cref{alg:sava} and reused for subsequent batches. This significantly reduces \emph{SAVA} runtimes with no detriment to performance~\Cref{fig:l2l_caching_ablation}. All experimental results in~\S\ref{sec:experiments}, unless otherwise stated, implement this caching strategy.

% (i.e., ground cost metric for measures represented for batches.)
\textbf{Batch sizes.}
If we consider Eq. (\ref{eq:HOT_definition}), HOT provides the upper bound for the OT since its optimal coupling is on batches of data points. HOT recovers the OT when either batch only has one support or each dataset only has one batch. Consequently, up to a certain batch size, when increasing the batch size $N_i$, HOT converges to the OT, but its memory complexity also increases, i.e.,  $\mathcal{O}(N_i \times N'_j) \rightarrow \mathcal{O}(N \times N')$. On the other hand, if the batch size $N_i$ is too small, the number of batches $K_t$ will be large. As a result, the memory complexity will also be high, i.e., $\OT_{ \textcolor{violet}{\bar{\ermC}} }(\bar\mu_t, \bar\mu_v)$ in Algorithm \ref{alg:sava}. Thus, the batch size will trade off the memory complexity and the approximation of HOT to standard OT. 

%\sam{I think the arrows $\uparrow$ and $\downarrow$ are not necessary here?} \tam{agreed, we have had the text for them} \vunguyen{okay, I've removed them.}

%Especially, OT is leveraged as a surrogate to compute the data valuation.

%We can use label information for the grouping; or cluster over features (i.e., $x$); or randomly grouping; or grouping based on both feature and label; or some prior information. \vunguyen{discussion here depends on the Ablation study?}

\section{Experiments}
\label{sec:experiments}

%We aim to test two following hypotheses\footnote{The code to reproduce all experiments can be found at \url{https://github.com/skezle/sava}.}:
%\begin{enumerate}
We aim to the test two following hypotheses: (i) Can \emph{SAVA} scale and overcome the memory complexity issues which hinder \emph{LAVA} while maintaining similar performance? (ii) Can \emph{SAVA} scale to a large real-world noisy dataset with over a \emph{million} data points? 
%\end{enumerate}

\definecolor{plt_green}{HTML}{66c2a5}
\begin{figure*}
    \vspace{-0pt}
    \centering
    \includegraphics[width=1\textwidth]{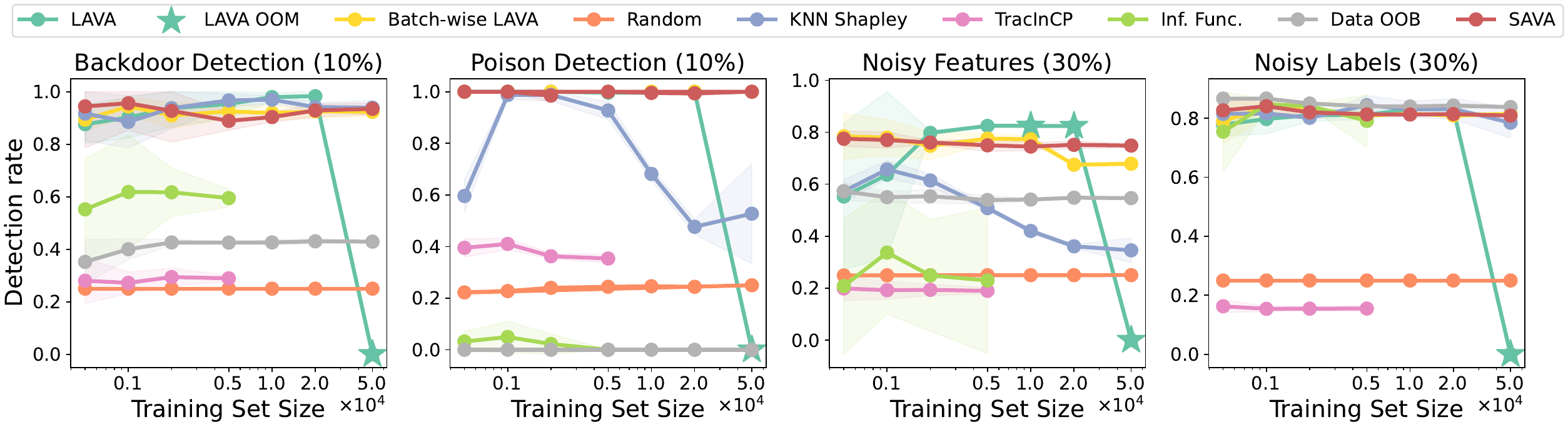}
    \vspace{-1.2em}
    \caption{\textbf{\emph{SAVA} can value the full CIFAR10 dataset with various corruptions, while \emph{LAVA} has out-of-memory (OOM) issues}. We sort training examples by the highest OT gradients in Eq. (\ref{eq_lava_calibrated_gradient}) and Eq. (\ref{eq:HOT_data_point_tr}) for \emph{LAVA} and \emph{SAVA} respectively, and use the fraction of corrupted data recovered for a prefix of size $N / 4$ as the detection rate (where $N$ is the training set size). The star symbol ($\textcolor{plt_green}{\bigstar}$) denotes the point at which \emph{LAVA} is unable to continue valuing training due GPU out-of memory (OOM) errors.} 
    \label{fig:cifar10_corruptions_detection}
    \vspace{-10pt}
\end{figure*}

\subsection{Dataset Corruption Detection} \label{sec:corruption_exp}
We test the scalability of \emph{SAVA} versus \emph{LAVA}~\citep{just2023lava} by leveraging the CIFAR10 dataset, introducing a corruption to a percentage of the training data, but keeping the validation set clean. We then assign a value to each data point in the training set. Following~\cite{pruthi2020estimating, just2023lava}, we sort the training examples by their values. An effective data valuation method would rank corrupted examples in a prefix of ordered points. We use the fraction of corrupted data recovered by the prefix of size $N / 4$ as our detection rate (see an example in~\Cref{app:data_val_rankings},~\Cref{fig:data_val_example}). 

% \begin{wrapfigure}{r}{0.50\textwidth}
%     \vspace{-8pt}
%     \centering
%     \includegraphics[width=0.50\textwidth]{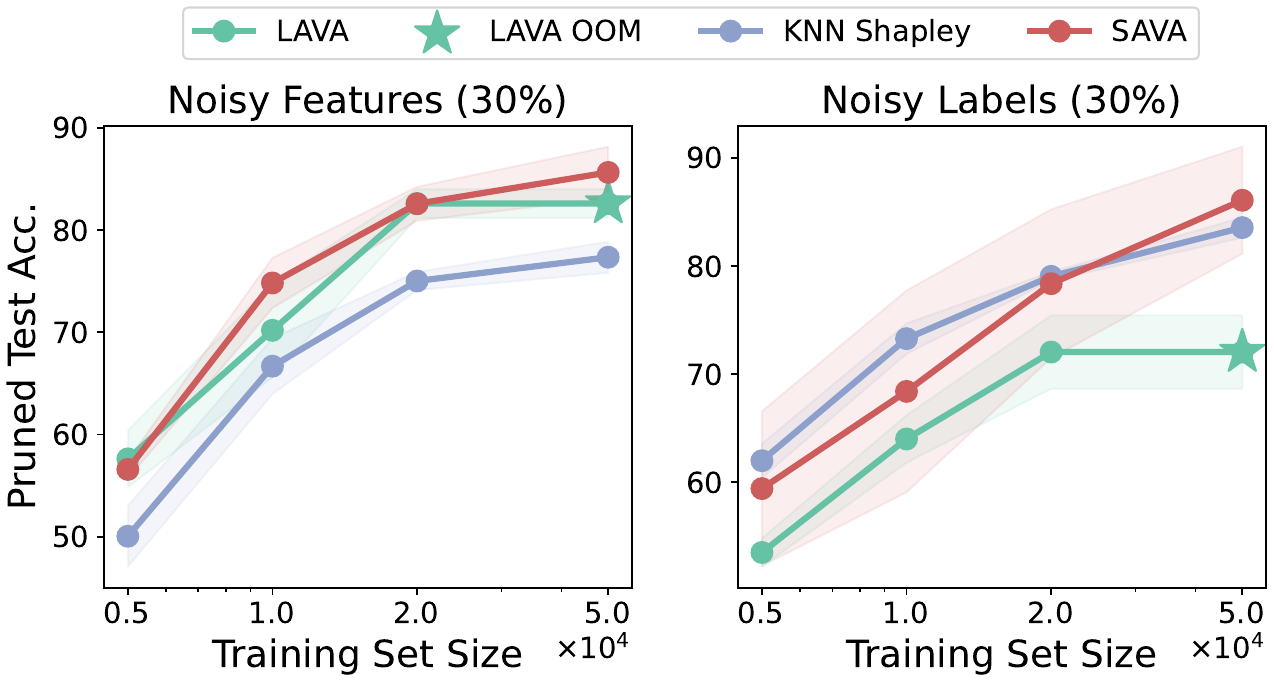}
%     %\vspace{-10pt}
%     \caption{\textbf{\emph{SAVA} can scale to the full CIFAR10 dataset enabling better data selection performance}. After valuation, we prune a prefix of size $N/4$ and train a ResNet18 model on the remaining points to report test accuracy. The symbol ($\textcolor{plt_green}{\bigstar}$) denotes the point at which \emph{LAVA} is unable to continue valuing training due to GPU out-of memory (OOM) errors.} %\sam{Place in appendix}.}

%     %After valuation, we prune a prefix of the same proportion as the proportion of corrupted data points -- those are the points most likely to contain corruptions, and then train a ResNet18 model on the remaining points to report test accuracy.
    
%     \label{fig:cifar10_corruptions_prune}
%     \vspace{-10pt}
% \end{wrapfigure}
\textbf{Setup.} We consider $4$ different corruptions (see~\Cref{appendix_data_corruptions} for details) applied to training data following~\citet{just2023lava}: (i) \emph{noisy labels}, (ii) \emph{noisy features}, (iii) \emph{backdoor attacks} and (iv) \emph{poison detections}. All experiments are run on a Tesla K$80$ Nvidia GPUs with $12$GB GPU RAM. Unless otherwise stated, all results reported are a mean $\pm 1$ standard deviation over $5$ independent runs.

\textbf{Baselines.} Our main baseline is \emph{LAVA}. For \emph{SAVA} and \emph{LAVA} we use features from a pre-trained ResNet18~\citep{he2016deep}. For \emph{SAVA}, we use a default batch size of $N_i=1024$ which is its main hyperparameter.\footnote{See \Cref{sec:ablations} for the study of the sensitivity of the batch size $N_i$ in \emph{SAVA}.} We consider \emph{KNN Shapley}~\citep{jia2019efficient}, the Shapley value measures the marginal improvement in the utility of a data point and uses KNN to approximate the Shapley value. \emph{KNN Shapley} also uses ResNet18 features to calculate Shapley values, we tune $k$ as its performance is sensitive to this hyperparameter. We consider \emph{TracInCP}~\citep{pruthi2020estimating} which measures the influence of each training point by measuring the difference in the loss at the beginning versus the end of training. We also compare with \emph{Influence Function}~\citep{koh2017understanding} which approximates the effect of holding out a training point on the test error, it uses expensive approximations of the Hessian to calculate the influence of a training point. We also consider \emph{Data-OOB}~\citep{kwon2023data} which uses bagging estimators to value data points, data points are embedded using a pre-trained feature extractor, and the bagging estimator is a decision tree (see~\Cref{sec:imp_details_cifar10} for implementation details). We finally consider a naive OT baseline which obtains values for data points at a batch level, and aggregates values across validation batches; the baseline essentially performs \emph{LAVA} only at a data point level within a batch and averaging values across validation batches. We call this baseline \emph{Batch-wise LAVA}. Although \emph{Batch-wise LAVA} obtains good corruption detection results, it is very sensitive to the batch size (see~\Cref{sec:batchwise_lava_robustness}).

\textbf{Noisy labels.} We corrupt $30\%$ of the labels in the training set by randomly assigning the target a different label. From~\Cref{fig:cifar10_corruptions_detection}, we can see that \emph{LAVA} has an out-of-memory (OOM) issue when valuing the full training set of $50$K points. In contrast, \emph{SAVA}, \emph{Batch-wise LAVA}, \emph{KNN Shapley} and \emph{Data-OOB} can consistently value and detect all corruptions when inspecting $12.5$K ordered samples by values. \emph{Influence Functions} matches the performance of \emph{SAVA}, but is very expensive to run on large datasets. \emph{TracInCP} is unable to detect noisy labels better than random selection, similar to the observations in~\citet{just2023lava}. Consequently, when pruning $30\%$ of the data, \emph{SAVA} can consistently improve its accuracy on the test set as the training set size increases~(\Cref{sec:cifar10_pruning}).

\textbf{Noisy features.} We add Gaussian noise to $30\%$ of the training images to simulate feature corruptions that might occur in real datasets. From~\Cref{fig:cifar10_corruptions_detection}, \emph{LAVA} obtains good performance for moderate dataset sizes, but for training set sizes above $10$K, some runs have OOM errors when embedding the entire training set into memory and when calculating the full cost matrix for OT problem. In contrast, \emph{SAVA} can consistently value and detect corruptions. \emph{Batch-wise LAVA} is also able to scale and gets similar performance to \emph{SAVA}. As can \emph{KNN Shapley} and \emph{Data-OOB}, albeit its detection rate is lower than \emph{SAVA}; as we prune $30\%$ of the data for larger and larger dataset sizes performance of \emph{SAVA} outperforms \emph{KNN Shapley}~\Cref{fig:cifar10_corruptions_prune}. \emph{TracInCP} and \emph{Influence Functions} both struggle to detect noisy data points, both methods were originally shown to detect noisy labels, so we do not expect them to work well beyond noisy label detection.

\textbf{Backdoor attacks.} We corrupt $10\%$ of the data with a Trojan square attack~\citep{liu2018trojaning}. \emph{SAVA}, \emph{Batch-wise LAVA} and \emph{KNN Shapley} can scale as the dataset size increases while \emph{LAVA} has an OOM error when valuing the largest dataset with $50$k points. \emph{TracInCP} and \emph{Influence Functions} both struggle to detect backdoored training points and have long runtimes. \emph{Data-OOB} also struggles to value data points in this scenario, this is in line with recent surveys which show less strong performance when valuing data with noisy features~\citep{jiang2023opendataval}.% while calculating the classwise Wasserstein distance between the validation set and training set embeddings Eq. (\ref{eq:groundcost_label}).

\textbf{Poisoning attacks.} We corrupt $10\%$ of the data with a poison frogs attack~\citep{shafahi2018poison}. We find in~\Cref{fig:cifar10_corruptions_detection} that \emph{SAVA} and \emph{Batch-wise LAVA} can scale and maintain a high detection rate. In contrast, \emph{KNN Shapley} struggles to detect corrupted data points after inspecting $25\%$ of the ordered training data. \emph{TracInCP}, \emph{Data-OOB} and \emph{Influence Functions} struggle to detect the corrupted data.

%\textbf{Pruning.} We can prune the $N/4$ data points and train a classifier on the pruned dataset in~\Cref{fig:cifar10_corruptions_prune}. \emph{SAVA} can value a larger and larger pool of training data. The subsequently pruned dataset provides better and better performance as more clean data - which resembles the validation set - is used for training. In contrast, \emph{LAVA} has an OOM issue for the largest dataset when running the Sinkhorn algorithm on a training set of size $50$K. As a result, the performance of the ResNet18 model does not improve when valuing larger training sets with \emph{LAVA}.
% \vunguyen{This OOM issue is caused by the large-scale matrix operation of size $32K \times 32K$} \sam{Exactly. Can you go over previous sentence to make sure it reads well, please?}. 
\vspace{-0.25cm}
\subsection{Large Scale Valuation and Pruning}
\vspace{-0.25cm}
We test our second hypothesis: whether \emph{SAVA} can scale to a large real-world dataset. We consider the web-scrapped dataset Clothing1M~\citep{xiao2015learning} where the training set has over $1$M images whose labels are noisy and unreliable. However, the validation set has been curated. Clothing1M is a $14$-way image classification problem and has been used for previous work on online data selection~\citep{mindermann2022prioritized}. We consider \emph{SAVA} and other data pruning methods as baselines to remove low-value training data points before training a ResNet18 classifier.

\begin{wrapfigure}{r}{0.56\textwidth} % 'r' for right, and width of the figure
    \vspace{-1em}
  \centering
  \includegraphics[width=0.53\textwidth]{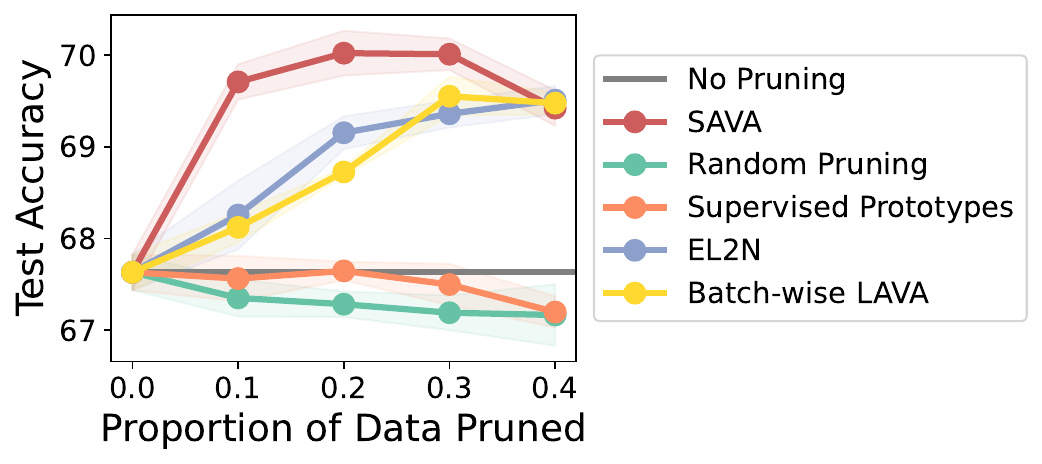}
    \vspace{-0.7em}
    \caption{\textbf{\emph{SAVA} can scale to a large web-scrapped dataset.} We use \emph{SAVA} and other baselines, to value data points and then prune a certain percentage of the noisy training set. The resulting dataset is used for training a classifier.} 
    \label{fig:clothing1M}
    \vspace{-13pt}
\end{wrapfigure}
We compare to \emph{Batch-wise LAVA}, introduced in~\S\ref{sec:corruption_exp}. We also compare to \emph{EL2N}~\citep{paul2021deep} which values training points using the loss on several partially trained networks to decide which points to remove. We train $10$ models for $10$ epochs, we perform a cross-validate a sliding window of values to decide which \emph{EL2N} values to keep (see~\Cref{sec:imp_details_el2n} for details). We also consider supervised prototypes~\citep{sorscher2022beyond} which prunes image embeddings according to how similar they look to cluster centers after clustering image embeddings (see~\Cref{sec:imp_details_sup_proto} for details).

If we were to train a classifier on the full noisy training set, we would obtain an accuracy of $67.6 \pm 0.2$, this remains constant as we randomly prune more data. Supervised prototypes obtains better results than random, and improves slightly over random pruning. This is expected since supervised prototypes is a semantic deduplication method and ignores label information. \emph{SAVA}, \emph{Batch-wise LAVA} and \emph{EL2N} perform well and we find that \emph{SAVA} performs better than both \emph{EL2N} and \emph{Batch-wise LAVA}. This shows the benefit of the SAVA's weighting of the (sub)gradient of the $\OT$ across the validation dataset using optimal plan across batches $\textcolor{orange}{ \bar{\pi}^*(\bar\mu_t, \bar\mu_v)}$ rather than \emph{Batch-wise LAVA}'s uniform weighting. \emph{SAVA} produces the best accuracy model of $70.0 \pm 0.2$ in~\Cref{fig:clothing1M}.

\vspace{-0.35cm}
\section{Conclusions}
\label{sec:conclusion}
\vspace{-0.25cm}

We have presented a scalable extension to \emph{LAVA} to address the challenges posed by large-scale datasets. Instead of relying on the expensive OT computation on the whole dataset, our proposed \emph{SAVA} algorithm involves jointly optimizing multiple smaller OT tasks across batches of data points and within individual batches. We empirically show that \emph{SAVA} maintains performance in data valuation tasks while successfully scaling up to handle a large real-world noisy dataset. This makes the data valuation task feasible for large-scale datasets. Our proposed method, along with the theoretical correction~\citep[Theorem 2]{just2023lava}, will complement, strengthen and improve (in terms of efficiency) OT-based data valuation.

% Moreover, we correct a prior theoretical result~\citep[Theorem 2]{just2023lava}. Building upon it, we describe the exact trade-off to estimate the (sub)gradient of hierarchical OT with entropic regularization in \emph{SAVA} in Lemma~\ref{lemma2_HOT_vs_OT_eps}.

%2. We correct the essential theoretical result on the trade-off for using entropic regularization to compute the OT (sub)gradient in LAVA~\citep[Theorem 2]{just2023lava}.\footnote{\citet{just2023lava} overlooked one of the important constraints that the transportation plan should be nonnegative (i.e., $\pi \ge 0$), and had a mistake in the discrete formulation of entropic regularization.} Consequently, by building upon our refined theory, we derive the exact trade-off to estimate the (sub)gradient of hierarchical OT with entropic regularization in our framework.

\textbf{Limitations.} $\text{HOT}$ finds the optimal coupling at the batch level, but not at the global level as in the traditional OT, i.e., $\text{OT}(\mu_t, \mu_v) \le \text{HOT}(\mu_t, \mu_v)$, which makes the validation error bound looser~\citep[Eq. (1), Theorem 1]{just2023lava}. However, our experiments indicate little performance degradation for small datasets between \emph{SAVA} and \emph{LAVA}. See~\Cref{app:limitations} for further discussion on limitations.

%\begin{comment}
\section*{Acknowledgements}
%The authors would like to thank Hoang Anh Just for helpful discussions. 
We thank the area chairs, anonymous reviewers for their comments, and Hoang Anh Just for helpful discussions. TL acknowledges the support of JSPS KAKENHI Grant number
23K11243, and Mitsui Knowledge Industry Co., Ltd. grant.
%\end{comment}

\section*{Reproducibility Statement} 
tWe have described in detail our algorithm in~\Cref{section_sava} with~\Cref{alg:sava}. All implementation details for our method and baselines are described in~\Cref{sec:experiments} and expanded on in~\Cref{sec:impl_details}. The source code to reproduce our experiments is included in our submission supplementary material. The source code to reproduce our experiments is included in our submission supplementary material. The code to reproduce all experiments can be found at \url{https://github.com/skezle/sava}.

%We will open-source our code upon acceptance of our paper. 

%\bibliography{main, bibSobolev22, bibEPT21}
%\bibliographystyle{iclr2025_conference}

\newpage

%\appendix

\begin{appendices}

\crefalias{section}{appsec}
\crefalias{subsection}{appsec}
\crefalias{subsubsection}{appsec}

\setcounter{equation}{0}
\renewcommand{\theequation}{\thesection.\arabic{equation}}

\onecolumn

\renewcommand{\arraystretch}{1.75} % Adjust row height

\section*{\LARGE Appendix}

In this appendix, we provide discussion regarding the border impact of our work in \S\ref{app:broader_impact} and the limitations of our work in \S\ref{app:limitations}. We also provide details of theoretical results in \S\ref{appendix_theoretical_proofs}, describe the data corruptions used in our experiments in \S\ref{appendix_data_corruptions}. In \S\ref{app:data_val_rankings}, we further discuss how we calculate detection rates and discuss data valuation rankings. We also give details for the implementations used in the experiments in \S\ref{sec:impl_details}. In \S\ref{sec:ablations}, we provide further empirical results. In \S\ref{app:related_works}, we present and discuss further related works. We also provide a visualization for the artifacts in \emph{SAVA} in \S\ref{app:SAVA_visual}, and other further discussions in \S\ref{app:other_disscussions}.

% \section*{Table of Contents}
% \vspace*{-10pt}
% \startcontents[sections]
% \printcontents[sections]{l}{1}{\setcounter{tocdepth}{2}}

\section{Broader Impact}
\label{app:broader_impact}
Data selection in deep learning for training neural networks can significantly enhance the efficiency and effectiveness of model training. By enabling faster training and improved generalization performance, data selection techniques reduce the computational resources and time required, leading to notable environmental benefits such as lower energy consumption and reduced carbon footprint. By using, an optimal transport approach to data valuation ensures high-quality, relevant data is selected, improving model performance. However, this approach also carries risks: a malicious actor could curate a harmful validation dataset, leading to the training of models on dangerous or unethical data. This underscores the importance of vigilance and ethical considerations in dataset creation and curation.

\section{Limitations}
\label{app:limitations}
One theoretical limitation stated in~\Cref{sec:conclusion} regards a looser validation error bound~\citep[Eq. (1), Theorem 1]{just2023lava} due to the use of hierarchical optimal transport~\citep{yurochkin2019hierarchical, lee2019hierarchical}. However, the validation error bound for \emph{SAVA} remains useful as in \emph{LAVA} since it can be interpreted that minimizing either the OT or hierarchical OT between training and validation sets, and will bound the model's validation error. As we observe in practice, there is little difference in performance between \emph{SAVA} and \emph{LAVA}~\citep{just2023lava}. Another limitation of OT-based data valuation methods is their dependence on a clean validation dataset.
%which does not use hierarchical optimal transport to value data points.

Another limitation of our work is that the ground cost we consider is limited to labeled datasets.\footnote{For unlabelled datasets, in some sense, it is equivalent to set the weight coefficient $c=0$, to ignore the contribution of label information for the ground cost.} We have not explored different ground truth costs for text data or speech data, which are interesting directions for future investigation. 

\section{Details of Theoretical Results} 
\label{appendix_theoretical_proofs}

\subsection{Existing Theorem 2 in \cite{just2023lava} } \label{appendix_issue_theorem2_lava}

\begin{theorem} [restated Theorem 2 in \citet{just2023lava}] The difference between calibrated gradients for two data points in $\sD_t$ and $\sD_v$ can be calculated as
\begin{align}
\frac{\partial \operatorname{OT}\left(\mu_{t}, \mu_{v}\right)}{\partial \mu_{t}\left(z_{i}\right)}-\frac{\partial \operatorname{OT}\left(\mu_{t}, \mu_{v}\right)}{\partial \mu_{t}\left(z_{k}\right)} = \frac{\partial \OT_{\varepsilon}\left(\mu_{t}, \mu_{v}\right)}{\partial \mu_{t}\left(z_{i}\right)}-\frac{\partial \OT_{\varepsilon}\left(\mu_{t}, \mu_{v}\right)}{\partial \mu_{t}\left(z_{k}\right)}  - \varepsilon \frac{N}{N-1}\left( \frac{1}{\left(\pi_{\varepsilon}^{*}\right)_{kj}} - \frac{1}{\left(\pi_{\varepsilon}^{*}\right)_{i j}} \right)
\end{align}
\end{theorem}

% We correct the theoretical result on the trade-off for using entropic regularization to estimate OT (sub)gradient in LAVA~\citep[Theorem 2]{just2023lava}. 
% We refine~\citep[Theorem 2]{just2023lava} in two ways:} 

The above theorem analyzes the trade-off of using entropic regularization to estimate the OT gradient. However, the theoretical result suffers two key drawbacks that we correct below:
\begin{enumerate}
    \item The original derivation ignores the positive constraint when deriving its dual formulation: $\pi \ge 0$. This positivity is essential to have the OT plan in the valid domain.
    \item In the proof of~\citep[Theorem 2]{just2023lava}, the authors have incorrectly\footnote{This could be a typo within the derivation from \cite{just2023lava}.} written the relative entropy (i.e., Kullback-Leibler divergence) term for the discrete case 
    \begin{align}
    H( \pi_\varepsilon | \mu_t \otimes \mu_v) = \sum_{i=1}^N \sum_{j=1}^{N'}  \log  \frac{ (\pi_\varepsilon)_{ij}  }{ \mu_t(z_i) \mu_v(z_j)}. 
    \end{align}    
    However, the correct term should be written as follows as used in~\citep{Cuturi-2013-Sinkhorn, genevay2016stochastic, pmlr-v84-genevay18a}
\begin{align}
H( \pi_\varepsilon | \mu_t \otimes \mu_v) = \sum_{i=1}^N \sum_{j=1}^{N'} \textcolor{red}{(\pi_{\varepsilon})_{ij}}  \log  \frac{ (\pi_\varepsilon)_{ij}  }{ \mu_t(z_i) \mu_v(z_j)}.
\end{align}
\end{enumerate}

%\begin{theorem} [The corrected version of Theorem 2 by taking into account: (i) on the non-negative constraint $\pi \ge 0$ for the dual formulation, and (ii) on the corrected discrete formulation of entropic regularization] \label{Theorem2_corrected}
% \begin{align}
% & \frac{\partial \operatorname{OT}\left(\mu_{t}, \mu_{v}\right)}{\partial \mu_{t}\left(z_{i}\right)}-\frac{\partial \operatorname{OT}\left(\mu_{t}, \mu_{v}\right)}{\partial \mu_{t}\left(z_{k}\right)}  \nonumber \\
% & = \frac{\partial \OT_{\varepsilon}\left(\mu_{t}, \mu_{v}\right)}{\partial \mu_{t}\left(z_{i}\right)}-\frac{\partial \OT_{\varepsilon}\left(\mu_{t}, \mu_{v}\right)}{\partial \mu_{t}\left(z_{k}\right)}
% + \varepsilon \frac{N}{N-1}  \left( \textcolor{blue}{ \log\frac{\left(\pi_{\varepsilon}^{*}\right)_{i j}/\mu_t(z_i)}
%  {\left(\pi_{\varepsilon}^{*}\right)_{k j}/\mu_t(z_k)} + \frac{ h_{kj}^{*} - h_{i j}^{*}}{\varepsilon} }
%  \right) 
% \end{align}
% \end{theorem}

\subsection{Proof of Theorem \ref{Theorem2_corrected} } \label{appendix_proof_theorem2}

\begin{theorem} [restated Theorem \ref{Theorem2_corrected} in the main paper] The difference between calibrated gradients for two data points in $\sD_t$ and $\sD_v$ can be calculated as
\begin{align}
& \frac{\partial \operatorname{OT}\left(\mu_{t}, \mu_{v}\right)}{\partial \mu_{t}\left(z_{i}\right)}-\frac{\partial \operatorname{OT}\left(\mu_{t}, \mu_{v}\right)}{\partial \mu_{t}\left(z_{k}\right)} = \frac{\partial \OT_{\varepsilon}\left(\mu_{t}, \mu_{v}\right)}{\partial \mu_{t}\left(z_{i}\right)}-\frac{\partial \OT_{\varepsilon}\left(\mu_{t}, \mu_{v}\right)}{\partial \mu_{t}\left(z_{k}\right)} \nonumber \\
& \hspace{17em} + \varepsilon \frac{N}{N-1}  \left( \textcolor{blue}{ \log\frac{\left(\pi_{\varepsilon}^{*}\right)_{i j}/\mu_t(z_i)}
 {\left(\pi_{\varepsilon}^{*}\right)_{k j}/\mu_t(z_k)} + \frac{ h_{kj}^{*} - h_{i j}^{*}}{\varepsilon} }
 \right)
\end{align}
where $h^*$ is the corresponding optimal dual variable, accounting for non-negative constraint on the transportation plan $\pi \ge 0$ in the primal OT formulation.
\end{theorem}

\begin{proof}  
Using the same notation as in LAVA~\citep{just2023lava}, the Lagrangian function for the standard OT formulation between datasets $\sD_{t}$ and $\sD_{v}$ %\textcolor{red}%{Sam: or should it be $\sD_t$ and $\sD_v?$}
\begin{align}
\mathcal{L}(\pi, h, f, g)=\langle\pi, C\rangle+\langle h, \pi\rangle+\sum_{i=1}^{N} f_{i}\left(\pi_{i}^{\top} \textbf{I}_{N'}-a_{i}\right) 
+\sum_{j=1}^{N'} g_{j}\left(\textbf{I}_{N}^{\top} \pi_{j}-b_{j}\right),
\end{align}
where we consider the dual variable $h$ for the corresponding constraint $\pi \ge 0$ in the primal OT, which was overlooked in LAVA. Additionally, $f_i$ with $1 \le i \le N$ and $g_j$ with $1 \le j \le N'$ are corresponding dual variables for the marginal constraints in OT.

The Lagrangian function for entropic regularized OT formulation between datasets $\sD_{t}$ and $\sD_{v}$ is
\begin{align}
\mathcal{L}_{\varepsilon}\left(\pi_{\varepsilon}, f_{\varepsilon}, g_{\varepsilon}\right)= & \left\langle\pi_{\varepsilon}, C\right\rangle+\varepsilon \sum_{i=1}^{N} \sum_{j=1}^{N'} \left(\pi_{\varepsilon}\right)_{i j} \log \frac{\left(\pi_{\varepsilon}\right)_{i j}}{\mu_{t}\left(z_{i}\right) \mu_{v}\left(z_{j}\right)} \nonumber \\
& +\sum_{i=1}^{N}\left(f_{\varepsilon}\right)_{i}\left(\left(\pi_{\varepsilon}\right)_{i}^{\top} \textbf{I}_{N'}-\mu_{t}\left(z_{i}\right)\right)+\sum_{j=1}^{N'}\left(g_{\varepsilon}\right)_{j}\left(\textbf{I}_{N}^{\top}\left(\pi_{\varepsilon}\right)_{j}-\mu_{v}\left(z_{j}\right)\right),
\end{align}
where we correct the discrete formulation of entropic regularization (used in LAVA) in the second term on the right-hand side. We have used the following notations:
\begin{itemize}
    %\item  $C^{N \times N'}$ is a cost matrix.
    \item $\textbf{I}_{N'}=(1,1, \cdots, 1) \in \mathbb{R}^{N' \times 1}$. Similarly, $\textbf{I}_{N} \in \mathbb{R}^{N \times 1}$.    
    \item  $\pi$ is a primal variable (i.e., the OT assignment matrix or transporation plan).  $\pi_{i}^{\top}$ is the $i^{\text {th}}$ row of the matrix $\pi$. $\pi_{j}$ is the $j^{\text {th}}$ column of matrix $\pi$.
    %\item  $h,f,g$ are dual variables.
    \item The subscript $\varepsilon$ indicates primal/dual variables for corresponding entropic regularized OT.

\end{itemize}

Using the first-order necessary condition, we have
\begin{align}
& \nabla \mathcal{L}_{\pi}\left(\pi^{*}, h^{*}, f^{*}, g^{*}\right)=0 \\
& \nabla\left(\mathcal{L}_{\varepsilon}\right)_{\pi}\left(\pi_{\varepsilon}^{*}, f_{\varepsilon}^{*}, g_{\varepsilon}^{*}\right)=0
\end{align}
where $\pi^*$ is the optimal solution to the primal formulation, and $(h^{*}, f^{*}, g^{*})$ are optimal solution to the dual problem.

For $\forall i \in\{1,2, \ldots, N\}$, and $\forall j \in\{1,2, \ldots N'\}$, we have
\begin{align}
& \nabla \mathcal{L}_{\pi}\left(\pi^{*}, h^{*}, f^{*}, g^{*}\right)_{i j}=C_{i j}+h^*_{i j}+f_{i}^{*}+g_{j}^{*}=0  \label{eq_proof_t2_1} \\
& \nabla \left( \mathcal{L}_{\varepsilon} \right)_{\pi}\left(\pi_{\varepsilon}^{*}, f_{\varepsilon}^{*}, g_{\varepsilon}^{*}\right)_{i j}=C_{i j}+\varepsilon \left(\log \frac{\left(\pi_{\varepsilon}^{*}\right)_{i j}} {\mu_t(z_i) \mu_v(z_j)} + 1\right) + \left(f_{\varepsilon}\right)_{i}^{*}+\left(g_{\varepsilon}\right)_{j}^{*}=0. \label{eq_proof_t2_2}
\end{align}

Let's subtract Eq. (\ref{eq_proof_t2_2}) from Eq. (\ref{eq_proof_t2_1}), we have
\begin{align}
\left[f_{i}^{*}-\left(f_{\varepsilon}\right)_{i}^{*}\right]+\left[g_{j}^{*}-\left(g_{\varepsilon}\right)_{j}^{*}\right]+h_{i j}^{*} - \varepsilon \left(\log \frac{\left(\pi_{\varepsilon}^{*}\right)_{i j}} {\mu_t(z_i) \mu_v(z_j)} + 1\right) = 0.  \label{eq_proof_t2_3}
\end{align}
Similarly, $\forall k \neq i,  k \in\{1,2 \ldots N\}$, we have 
\begin{align}
\left[f_{k}^{*}-\left(f_{\varepsilon}\right)_{k}^{*}\right]+\left[g_{j}^{*}-\left(g_{\varepsilon}\right)_{j}^{*}\right]+h_{kj}^{*} - \varepsilon \left(\log \frac{\left(\pi_{\varepsilon}^{*}\right)_{k j}} {\mu_t(z_k) \mu_v(z_j)} + 1\right) = 0. \label{eq_proof_t2_4}
\end{align}

Let's subtract Eq. (\ref{eq_proof_t2_4}) from Eq. (\ref{eq_proof_t2_3}) and rearrange, we have
\begin{align}
\left[\left(f_{\varepsilon}\right)_{i}^{*}-\left(f_{\varepsilon}\right)^{*}_{k}\right]-\left(f_{i}^{*}-f_{k}^{*}\right) \label{eq_proof_t2_5} = \left(h_{i j}^{*}-h_{kj}^{*}\right) - \varepsilon\left[\log\frac{\left(\pi_{\varepsilon}^{*}\right)_{i j}/\mu_t(z_i)}{\left(\pi_{\varepsilon}^{*}\right)_{k j}/\mu_t(z_k)} \right].
%=\left[\left(f_{\varepsilon}\right)_{i}^{*}-\left(f_{\varepsilon}\right)^{*}_{k}\right]-\left(f_{i}^{*}-f_{k}^{*}\right) \label{eq_proof_t2_5}
\end{align}
Additionally, from the definition of the calibrated (sub)gradient, we have% in Eq. (3) of \citet{just2023lava}:
\begin{align}
& \frac{\partial \OT\left(\mu_{t}, \mu_{v}\right)}{\partial \mu_{t}\left(z_{i}\right)}-\frac{\partial \OT\left(\mu_{t}, \mu_{v}\right)}{\partial \mu_{t}\left(z_{k}\right)} = \frac{N}{N-1}\left(f_{i}^{*}-f_{k}^{*}\right)\label{eq_proof_t2_6}  \\
 & \frac{\partial \OT_{\varepsilon}\left( \mu_{t}, \mu_{v} \right)}{\partial \mu_{t}\left( z_{i} \right)}-\frac{\partial \OT_{\varepsilon} \left( \mu_{t}, \mu_{v} \right)}{\partial \mu_{t}\left( z_{k} \right)} = \frac{N}{N-1}\left[  \left( f_{\varepsilon} \right)_{i}^{*}-\left( f_{\varepsilon} \right)_{k}^{*} \right]. \label{eq_proof_t2_7}
\end{align}
%Let  Eq. (\ref{eq_proof_t2_7}) - Eq. (\ref{eq_proof_t2_6}), we have
Let's substract Eq. (\ref{eq_proof_t2_6}) from Eq. (\ref{eq_proof_t2_7}) and rearrange, we have
\begin{align}
\frac{\partial \OT_{\varepsilon}\left(\mu_{t}, \mu_{v}\right)}{\partial \mu_{t}\left(z_{i}\right)}-\frac{\partial \OT_{\varepsilon}\left(\mu_{t}, \mu_{v}\right)}{\partial \mu_{t}\left(z_{k}\right)} 
&= \frac{\partial \OT\left(\mu_{t}, \mu_{v}\right)}{\partial \mu_{t}\left(z_{i}\right)}-\frac{\partial \OT\left(\mu_{t}, \mu_{v}\right)}{\partial \mu_{t}\left(z_{k}\right)} \\
& +\frac{N}{N-1} \left[ \left( f_{\varepsilon} \right)_{i}^{*}-\left( f_{\varepsilon} \right)_{k}^{*} -\left(f_i^{*}-f_{k}^{*}\right) \right]. \label{eq_proof_t2_8}
\end{align}
Plugging Eq. (\ref{eq_proof_t2_5}) in Eq. (\ref{eq_proof_t2_8}), we have %(\textcolor{red}{Sam: $\varepsilon$ should be $\varepsilon$ right?}) 
\begin{align}
\frac{\partial \text{OT}_{\varepsilon}\left(\mu_{t}, \mu_{v}\right)}{\partial \mu_{t}\left(z_{i}\right)}-\frac{\partial \OT_{\varepsilon}\left(\mu_{t}, \mu_{v}\right)}{\partial \mu_{t}\left(z_{k}\right)}
= & \frac{\partial \operatorname{OT}\left(\mu_{t}, \mu_{v}\right)}{\partial \mu_{t}\left(z_{i}\right)}-\frac{\partial \operatorname{OT}\left(\mu_{t}, \mu_{v}\right)}{\partial \mu_{t}\left(z_{k}\right)} \nonumber \\
& -\varepsilon \frac{N}{N-1}
 \left(  \log\frac{\left(\pi_{\varepsilon}^{*}\right)_{i j}/\mu_t(z_i)}
 {\left(\pi_{\varepsilon}^{*}\right)_{k j}/\mu_t(z_k)} + \frac{ h_{kj}^{*} - h_{i j}^{*}}{\varepsilon} 
 \right). 
\end{align}
Rearranging it again to be aligned with the result in~\cite{just2023lava}, we conclude the proof %(\textcolor{red}{Sam: $\varepsilon$ should be $\varepsilon$ right?})
\begin{align}
& \frac{\partial \OT\left(\mu_{t}, \mu_{v}\right)}{\partial \mu_{t}\left(z_{i}\right)}-\frac{\partial \OT\left(\mu_{t}, \mu_{v}\right)}{\partial \mu_{t}\left(z_{k}\right)}  \nonumber \\
& = \frac{\partial \OT_{\varepsilon}\left(\mu_{t}, \mu_{v}\right)}{\partial \mu_{t}\left(z_{i}\right)}-\frac{\partial \OT_{\varepsilon}\left(\mu_{t}, \mu_{v}\right)}{\partial \mu_{t}\left(z_{k}\right)}
 -\varepsilon \frac{N}{N-1}
 \left( \textcolor{blue}{ \log\frac{\left(\pi_{\varepsilon}^{*}\right)_{i j}/\mu_t(z_i)}
 {\left(\pi_{\varepsilon}^{*}\right)_{k j}/\mu_t(z_k)} + \frac{ h_{kj}^{*} - h_{i j}^{*}}{\varepsilon} }
 \right).
\end{align}
\end{proof}

The term $\textcolor{blue}{ \log\frac{\left(\pi_{\varepsilon}^{*}\right)_{i j}/\mu_t(z_i)}
 {\left(\pi_{\varepsilon}^{*}\right)_{k j}/\mu_t(z_k)} }$ is a correction with respect to the corrected discrete formula for entropic regularization. In addition, 
we have an extra term $\textcolor{blue}{ \frac{ h_{kj}^{*} - h_{i j}^{*} }{\varepsilon}}$ where $h^{*}$ is the corresponding optimal dual variable, which accounts for $\pi \ge 0$ in the primal OT formulation. %(\textcolor{red}{Sam: $\varepsilon$ should be $\varepsilon$ right?}).

\subsection{Proof of Lemma~\ref{lemma2_HOT_vs_OT_eps}} \label{appendix_proof_lemma3}

We present the Proof of Lemma~\ref{lemma2_HOT_vs_OT_eps} as follows by building upon the refined theory in Theorem~\ref{Theorem2_corrected} (i.e., the corrected version for Theorem 2 in~\citet{just2023lava}).

%Let $\HOT(\mu_t,\mu_v)$ and $\OT_{\varepsilon}(\mu_t,\mu_v)$ be the $\HOT$ and entropic regularized OT between the measures $\mu_t$ and $\mu_v$ associated with the two datasets $\sD_t$ and $\sD_v$ respectively. 
\begin{lemma} [restated Lemma~\ref{lemma2_HOT_vs_OT_eps} in the main paper] %\label{lemma2_HOT_vs_OT_eps}
The difference between the calibrated gradients for two data points $ \{ z_l, z_h \} \in B_i \subset \sD_t$ can be calculated as 
\begin{align}
     & \frac{\partial \HOT(\mu_t, \mu_v)}{\partial \mu_t(z_k)} -  \frac{\partial \HOT(\mu_t, \mu_v)}{\partial \mu_t(z_l)}
       = \sum_{j=1}^{K_v} \textcolor{orange}{ \bar{\pi}^*_{ij}(\bar\mu_t, \bar\mu_v)} \Biggl[ \frac{\partial \OT_\varepsilon (\mu_{\emB_i}, \mu_{\emB'_j})}{\partial \mu_{\emB_i}(z_k)} - \frac{\partial \OT_\varepsilon (\mu_{\emB_i}, \mu_{\emB'_j})}{\partial \mu_{\emB_i}(z_l)} \nonumber \\
       & \hspace{12em} + \varepsilon \frac{N_i}{N_i -1}\left(\log\frac{(\bar{\pi}^*_{\varepsilon})_{k,j}/\mu_t(z_k)}{(\bar{\pi}^*_{\varepsilon})_{l,j}/\mu_t(z_l)} + {\frac{h_{l j}^{*}-h_{kj}^{*}}{\varepsilon}} \right)
    \Biggr],
\end{align}
where $h^*$ is the corresponding optimal dual variable, accounting for nonnegative constraint on the transportation plan (i.e., $\bar{\pi}^* \ge 0$) in the primal OT formulation.
\end{lemma}

\begin{proof}
Let $\OT(\mu_t,\mu_v)$ be the OT formulation between empirical measures $\mu_t$ and $\mu_v$, we present the proof as follows
\begin{eqnarray}
    \frac{\partial \HOT(\mu_t, \mu_v)}{\partial \mu_t(z_k)} -  \frac{\partial \HOT(\mu_t, \mu_v)}{\partial \mu_t(z_l)} \hspace{5.1em} &  \nonumber \\
    & \hspace{-22em} = \sum_{j=1}^{K_v} \textcolor{orange}{ \bar{\pi}^*_{ij}(\bar\mu_t, \bar\mu_v)}  \Biggl[ \frac{\partial \OT(\mu_{\emB_i}, \mu_{\emB'_j})}{\partial \mu_{\emB_i}(z_k)} - \frac{\partial \OT(\mu_{\emB_i}, \mu_{\emB'_j})}{\partial \mu_{\emB_i}(z_l)} \Biggr] \label{lem2_proof_step1} \\
    & \hspace{-22em} = \sum_{j=1}^{K_v} \textcolor{orange}{ \bar{\pi}^*_{ij}(\bar\mu_t, \bar\mu_v)} \Biggl[ \frac{\partial \OT_\varepsilon (\mu_{\emB_i}, \mu_{\emB'_j})}{\partial \mu_{\emB_i}(z_k)} - \frac{\partial \OT_\varepsilon (\mu_{\emB_i}, \mu_{\emB'_j})}{\partial \mu_{\emB_i}(z_l)} \nonumber \\
       & \hspace{-1em} + \varepsilon \frac{N_i}{N_i -1}\left(\log\frac{(\bar{\pi}^*_{\varepsilon})_{k,j}/\mu_t(z_k)}{(\bar{\pi}^*_{\varepsilon})_{l,j}/\mu_t(z_l)} + {\frac{h_{l j}^{*}-h_{kj}^{*}}{\varepsilon}} \right)
    \Biggr],
\label{lem2_proof_step2}
\end{eqnarray}
where Eq. (\ref{lem2_proof_step1}) follows the definition of HOT and Eq. (\ref{lem2_proof_step2}) utilizes our Theorem~\ref{Theorem2_corrected}.% (i.e., the refined Theorem 2 in~\citet{just2023lava}).
\end{proof}

%\citep{altschuler2017near, dvurechensky2018computational}

\section{SAVA Time Complexity}\label{sec:time_complexity}
\begin{comment}
\textcolor{orange}{
In terms of the time-complexity of \emph{SAVA} versus \emph{LAVA}, recall that the time complexity for solving the OT problem between two input probability measures with $N, N'$ supports (with $N \ge N'$) with the Sinkhorn is $O(N \times N’ \times \log(N))$~\citep{dvurechensky2018computational}.}

\textcolor{orange}{We also recall that $N$ is the size of the training set, $N’$ is the size of the validation set, $V$ is the number of classes, $K_t$ the number of training batches, $K_v$ the number of validation batches, $N_{y}$ the number of instances per class, and $N_{i_{y}}$ the number of instances per class per batch. Then the time complexity for LAVA is $O(V \times N_y \times N’_{y’} \times \log(N_y) + N \times N’ \times \log N)$, the first term is the computation of the class-wise Wasserstein distance and the second is for solving the OT problem over training and validation datasets. The time complexity for SAVA is $O(V \times N_{i_{y}} \times N’_{j_{y’}} \times \log N_{i_{y}} + K_t \times K_v \times N_i \times N_j \log N_i + K_t \times K_v \times \log K_t)$. The first term involves the computation of the classwise Wasserstein distance (once for our label-to-label caching implementation~\Cref{sec:label2label}). The second term is the time complexity for solving the batch-level OT problems (lines 1 to 5 in ~\Cref{alg:sava}) and the third term is for solving the OT problem between batches, line 6~\Cref{alg:sava}.
}
\end{comment}

\begin{table}[h!]
\centering
\caption{Time complexity comparison for LAVA and SAVA methods. $V$ and $V'$ are the number of classes in the training and validation sets, the classes are the same for the experiments in this paper. $N$ and $N'$ are the number of points in the training and validation sets, $N_{y_i}$ and $N'_{y'_j}$ are the number of points from training class $y_i$ and validation class $y'_j$. $N_i$ and $N'_j$ are the number of points in the training and validation batch. $K_t$ and $K_v$ are the number of batches in the training and validation sets.}
\begin{tabular}{l|l}
\hline
 & \textbf{Time Complexity} \\ \hline
\emph{LAVA}   & $V \times V' \times N_{y_i} \times N’_{y’_j} \times \log N_{y_i} + N \times N’ \times \log N$ \\ 
\emph{SAVA}   & $V \times V' \times N_{i_{y_i}} \times N’_{j_{y’_j}} \times \log N_{i_{y_i}} + K_t \times K_v \times N_i \times N'_j \log N_i + K_t \times K_v \times \log K_t$ \\ \hline
\end{tabular}
\label{tab:time_complexity}
\end{table}

For simplicity, we assume $N \ge N'$, then Sinkhorn complexity is $\tilde{\mathcal{O}}(N \times N’ \times \log N)$~\citep{dvurechensky2018computational}.

For \emph{LAVA}, for the training dataset (i.e., big dataset), let $V$ be the number of classes, $N_{y_i}$ be the number of instances in class $y_i$, then the total number of samples $N = \sum_i N_{y_i}$. Note that we use $V', N', N'_{y'_j}$ for the validation dataset (i.e., small dataset), and for simplicity, we assume that $N \ge N'$ and $N_{y_i} \ge N'_{y'_j}$. To compute class-wise Wasserstein distance, we need to solve $V \times V'$ OT problems with the complexity $\tilde{\mathcal{O}}(N_{y_i} \times N'_{y'_j} \times \log N_{y_i}$). Totally, the complexity is $\sum_{i=1}^{V} \sum_{j=1}^{V'} \tilde{\mathcal{O}}(N_{y_i} \times N'_{y'_j} \times \log N_{y_i})$. Additionally, we also need to solve OT between two datasets with complexity $\tilde{\mathcal{O}}(N \times N' \times \log N)$.

For \emph{SAVA}, with the label-to-label caching implementation, for the classwise Wasserstein distance, the total complexity is $\sum_{i=1}^V \sum_{j=1}^{V'} \tilde{\mathcal{O}}({N_i}_{y_i} \times {N'_j}_{y'_j} \times \log {N_i}_{y_i})$ where for each class $y_i, y'_j$, we sample ${N_i}_{y_i}, {N'_j}_{y'_j}$ respectively for these classes. Let $N_{i}, N'_{j}$ be the number of samples in batches and $K_t, K_v$ be the number of batches respectively, the total complexity to solve all batch level OT problems is $\sum_{i=1}^{K_v} \sum_{j=1}^{K_t} \tilde{\mathcal{O}}(N_i \times N'_{j} \times \log N_i)$. Additionally, we need to solve one more OT problem between batches $\tilde{\mathcal{O}}(K_v \times K_t \times \log K_v)$. These time complexities are summarized in~\Cref{tab:time_complexity}.

\paragraph{Further reducing computational complexity for Sinkhorn approach.} One may leverage recent advanced techniques to further speed up the computation of Sinkhorn algorithm, e.g., low-rank approaches~\citep{forrow2019statistical, altschuler2019massively, scetbon2021low}.

\section{Data Corruptions Descriptions} \label{appendix_data_corruptions}

%\begin{wrapfigure}{r}
\begin{figure}
%{0.6\textwidth} % "r" for right side, and width of the figure
  \centering
 % \vspace{-1em}
  \includegraphics[width=0.65\textwidth]{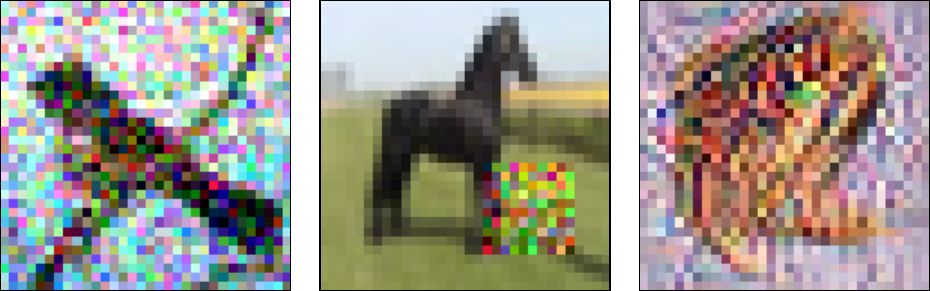} % Adjust the width as needed, and replace 'example-image' with your image file
  \caption{\textbf{Examples of the data corruptions used in our experimental setup}. Examples of data from the CIFAR$10$ dataset where the images have corruptions: noisy features, trojan square, and poison frogs corruptions respectively.}
  \label{fig:corruption_examples}
%  \vspace{-1cm}
%\end{wrapfigure}
\end{figure}

We consider $4$ different corruptions in our experiments~(\Cref{sec:experiments}) that are applied to the training set following the settings in \citet{just2023lava}:

\paragraph{Noisy labels.}  We corrupt a proportion of the labels in the training set by randomly assigning the target a different label. This is a common corruption found in webscale vision~\citep{xiao2015learning} and speech~\citep{radford2023robust} for instance. 

\paragraph{Noisy features.} We add Gaussian noise to a certain proportion of the images in the training set to simulate common background noise corruptions that might occur in real datasets. % where Gaussian noise is added to a proportion of the input dataset, 

\paragraph{Backdoor attacks.} A certain proportion of the training set is corrupted with a Trojan square attack~\cite{liu2018trojaning}, e.g., corrupted images have $10\times10$ pixel square trigger mask added with random noise and are relabeled to the trojan ``airplane'' class, see~\Cref{fig:corruption_examples} for an example of a corrupted image.

\paragraph{Poison detections.} A certain proportion of training data from a specific base class is poisoned such that the features look similar to a target class~\citep{shafahi2018poison}. Our target is the cat class from the CIFAR$10$ dataset, which is a randomly chosen image of a cat from the test set. We take a certain percentage of the base class which, in our case is the frog class from the CIFAR$10$ training dataset. Then we optimize the poisoned images themselves using gradient descent such that the feature spaces are close in Euclidean distance to the target cat image's features using a pre-trained feature extractor model (in our case a pre-trained ResNet18 model). This means that the poisoned images contain features that look like the cat class and the labels are kept the same, meaning that this is a very difficult attack to detect, and the features of frogs are poisoned to look like cats and labels remain uncorrupted. See~\Cref{fig:corruption_examples} for an example of a poisoned frog image. 

% \begin{figure}
%     \centering
%     \includegraphics[width=0.7\textwidth]{figures/noisy_features_example.pdf}
%     \caption{Examples of images which have the noisy features corruptions added.}
%     \label{fig:example-noisy-features}
% \end{figure}

% \begin{figure}
%     \centering
%     \includegraphics[width=0.7\textwidth]{}
%     \caption{Examples of images which the Trojan square corruption added.}
%     \label{fig:example-trojan-sq}
% \end{figure}

% \begin{figure}
%     \centering
%     \includegraphics[width=0.7\textwidth]{}
%     \caption{Examples of images have the poison frogs corruption add.}
%     \label{fig:example-poison-frogs}
% \end{figure}

%\begin{wrapfigure}{r}{0.55\textwidth} % "r" for right side, and width of the figure
\begin{figure}
  \centering
  %\vspace{-1em}
  \includegraphics[width=0.5\textwidth]{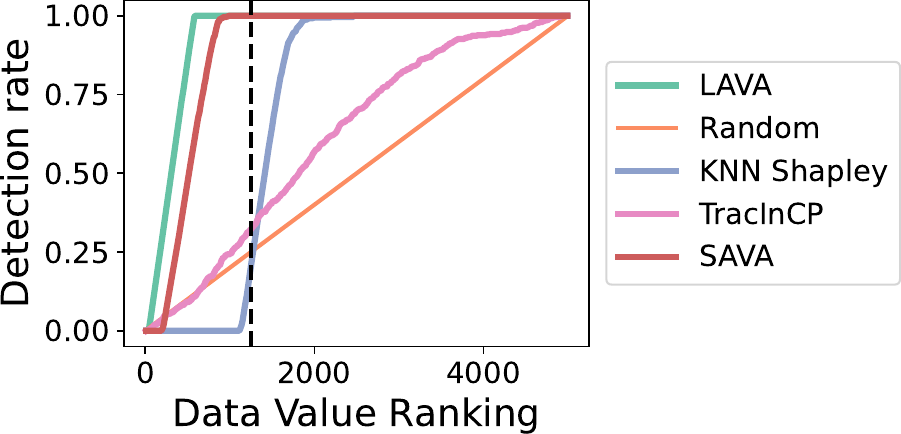} 
  % Adjust the width as needed, and replace 'example-image' with your image file
  \caption{Data value rankings for various methods for the $10\%$ poison frogs corruption. The number of corrupt datapoints in the prefix determines the detection rate. The black dashed line represents the $N / 4$ prefix which is used for calculating the detection rates in~\Cref{fig:cifar10_corruptions_detection} and~\Cref{fig:cifar10_corruptions_prune}.}
  \label{fig:data_val_example}
  %\vspace{-1em}
%\end{wrapfigure}
\end{figure}

\section{Data Valuation Rankings}
\label{app:data_val_rankings}
Corrupt data points are likely to be assigned a high value (a low value for KNN Shapley) and so can be used for ranking the data by importance. Therefore, following~\cite{pruthi2020estimating}, we sort the training examples by their value in descending order (ascending order for KNN Shapley). An effective data valuation method would rank corrupted examples toward the start of the data valuation ranking. We use the fraction of corrupted data recovered by the prefix of size $N / 4$ as our detection rate to measure the effectiveness of various methods in~\Cref{fig:cifar10_corruptions_detection}. See~\Cref{fig:data_val_example} for an example of this using a poison frogs~\citep{shafahi2018poison} corruption on $10\%$ of a dataset of size $5$k on CIFAR$10$.

\section{Implementation Details}
\label{sec:impl_details}
\subsection{CIFAR10 Corruption Detection}
\label{sec:imp_details_cifar10}

We use a single Nvidia K80 GPU to run all experiments.

\paragraph{SAVA.} For both \emph{SAVA} and \emph{LAVA} the metric between label spaces is computed using the the exact OT distances between conditional empirical measures for each label and we do not use Gaussian approximations proposed in~\cite{alvarez2020geometric}.

\paragraph{TracInCP.} In practice, TracInCP measures the influence of a training point by the dot product of the gradient of the NN model parameters evaluated on the validation set and the gradient of the NN model parameters evaluated on the specific training points, summed throughout training using saved checkpoints of a ResNet18 model trained for $100$ epochs and achieves a test accuracy of $83\%$ and training accuracy of $99\%$. We calculate gradients over the entire model and use every second checkpoint to calculate TracInCP values, these design choices result in fewer approximations than the original implementation~\citep{pruthi2020estimating}. 

\paragraph{Influence Functions. } We use the following repository for obtaining results on influence functions with default parameters as given at \url{https://github.com/nimarb/pytorch_influence_functions}.

\paragraph{KNN Shapley.} The method is very sensitive to $k$ which is a hyperparameter in the kNN algorithm used to approximate the expensive calculation of the Shapley value. We do a grid search over $k \in \{5, 10, 20, 50, 100, 200, 500, 1{,}000, 2{,}000\}$ on a validation set of size $1{,}000$ and training set of size $2{,}000$, where the validation set is taken from the CIFAR10 training set. Our implementation is the same as that used in the LAVA.

\paragraph{Data-OOB.} We use the default hyperparameters and use the implementation, given at \url{https://github.com/ykwon0407/dataoob}.

\paragraph{Pruning.} For the pruning experiments we greedily prune $N/4$ of the ranked points with the lowest value; the highest gradient of the OT for \emph{SAVA} and \emph{LAVA}. We then train a ResNet18 with the SGD optimizer with weight decay of $5\times 10^{-4}$ and momentum of $0.9$ for $200$ epochs with a learning rate schedule where for the first $100$ epochs the learning rate is $0.1$, then for the next $50$ epochs the learning rate is $0.01$, then the final $50$ epochs the learning rate is $0.001$.

\begin{figure}
    \centering
    \includegraphics[width=0.65\textwidth]{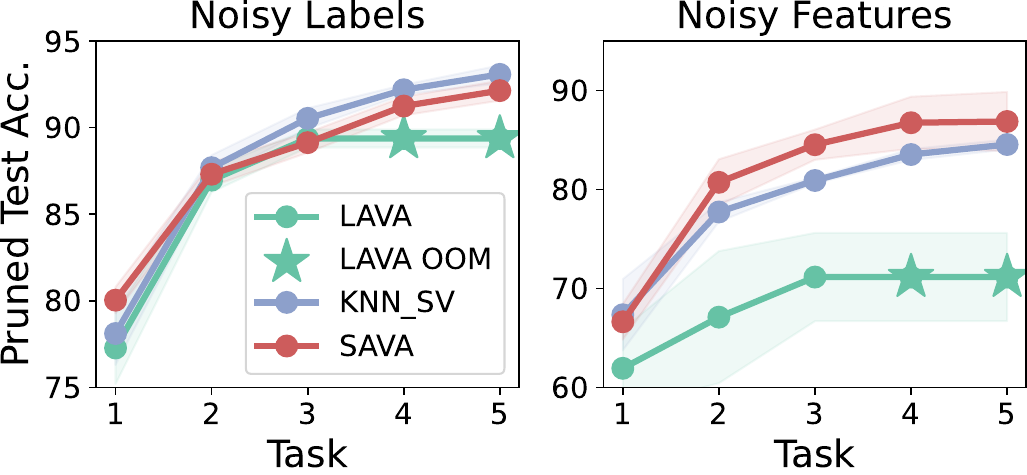}
    \caption{\textbf{\emph{SAVA} can value more data as a dataset incrementally increases in size.} For each task, we value the data and prune $30\%$ of the data which we then train a CNN to obtain a test accuracy. Left: $30\%$ noisy labels setting. Right: $30\%$ noisy feature setting. The star symbol ($\textcolor{plt_green}{\bigstar}$) denotes the point at which \emph{LAVA} is unable to continue valuing training due GPU out-of memory errors.}
    \label{fig:incremental-both}
 %   \vspace{-10pt}
\end{figure}

\subsection{Clothing1M}
\label{sec:imp_details_clothing1m}

For all experiments, we use a node with $8$ Nvidia K$80$ GPUs.

We use a ResNet18 model for feature extraction and for obtaining a final performance metric. We use an Adam optimizer with a weight decay of $0.002$. Since the pruned datasets can be of different sizes depending on the amount of pruning. We train for a fixed number of $100$k steps. We use a learning rate schedule where for the first $30$k steps the learning rate is $0.1$ then the next $30$k steps the learning rate is $0.05$ then the next $20$k steps the learning rate is $0.01$, then the next $10$k steps the learning rate is $0.001$, then the next $5$k steps the learning rate is $0.0001$ then for the final $5$k steps the learning rate is $0.00001$.

\subsubsection{SAVA}
\emph{SAVA} has remarkably few design choices in comparison to other data pruning methods like EL2N~(\Cref{sec:imp_details_el2n}) and supervised prototypes~(\Cref{sec:imp_details_sup_proto}). We train a ResNet18 encoder using the clean training set of size $47{,}570$. Note we do not use this training set to obtain final accuracies in~\Cref{fig:clothing1M}, we only use the large noisy training set for obtaining the results in~\Cref{fig:clothing1M}. We use a batch size of $2048$ for valuation and we use label-2-label matrix caching~(\Cref{sec:ablations}).

\subsubsection{EL2N}
\label{sec:imp_details_el2n}

To obtain values for the points in our noisy training set to then decide which training points to prune, we obtain our \emph{EL2N} scores by training for $10$ epochs $10$ separate ResNet18 models on the clean training set of size $47{,}570$. 

We also hyperparameter optimize the offset $\in \{0.0, 0.05, 0.1\}$ using a sliding window which covers $90\%$ of points~§4~\citep{paul2021deep}. This is to decide which range of ranked values to keep/ prune. We find that using an offset of $0.0$ worked best, so high values of the EL2N score will get pruned.

\subsubsection{Supervised Prototypes}
\label{sec:imp_details_sup_proto}
We train an image encoder using a classification objective on the clean training set provided in the Clothing1M dataset of size $47{,}570$. Note, that we do not use this dataset to train to augment the pruned noisy training sets after valuation and so are not used for the results in Figure \ref{fig:clothing1M}.

We use mini-batch k-means clustering to obtain centroids for our image embeddings. We tune the mini-batch k-means learning rate over the grid $\{0.1, 0.05, 0.01, 0.005, 0.001\}$ and the number of centroids over the grid $\{1{,}000, 2{,}000, 5{,}000, 10{,}000\}$ using the intra cluster mean-squared error on the validation dataset.

The best configuration uses a k-means clustering learning rate of $0.05$ and $10{,}000$ cluster centers. Then we can obtain cosine distances between every data point and its assigned cluster center and prune points which look the most prototypical before training a classifier on the pruned dataset.

\section{Additional Experiments}
\label{sec:ablations}

\subsection{Corruption Experiments Pruning Performance}
\label{sec:cifar10_pruning}
We can prune the $N/4$ data points and train a classifier on the pruned dataset in~\Cref{fig:cifar10_corruptions_prune}. \emph{SAVA} can value a larger and larger pool of training data. The subsequently pruned dataset provides better and better performance as more clean data---which
resembles the validation set---is used for training. In contrast, \emph{LAVA} has an OOM issue for the largest dataset when running the Sinkhorn algorithm on a training set of size $50$K. As a result, the performance of the ResNet18 model does not improve when valuing larger training sets with \emph{LAVA}.

\begin{figure}
    \centering
    \includegraphics[width=0.65\textwidth]{figures/cifar10_feat_labels_prune_acc.pdf}
    %\vspace{-10pt}
    \caption{\textbf{\emph{SAVA} can scale to the full CIFAR10 dataset enabling better data selection performance}. After valuation, we prune a prefix of size $N/4$ and train a ResNet18 model on the remaining points to report test accuracy. The symbol ($\textcolor{plt_green}{\bigstar}$) denotes the point at which \emph{LAVA} is unable to continue valuing training due to GPU out-of memory (OOM) errors.} %\sam{Place in appendix}.}
    
    \label{fig:cifar10_corruptions_prune}
    %\vspace{-10pt}
\end{figure}

\begin{algorithm}%[H]
\setcounter{AlgoLine}{0}
\DontPrintSemicolon
\caption{Incremental learning experimental setup.} 
\label{alg:incremental_learning}
\textbf{Input:} noisy training dataset initally empty $\sD_{t} := \emptyset $ and clean validation set $\sD_{v}$, data pruning proportion $p \in [0, 1]$.
 
\textbf{Output:} trained model $\mathcal{M}$.

% \tcc{Loop over tasks}      
 \For{$\mathcal{T}_t =1,...,T$} 
 {
    
    Get new data $\sD_{\mathcal{T}_{t}}$.
    
    Merge $\sD_{\mathcal{T}_{t}}$ with existing dataset $\sD_{t}:= \sD_{t} \cup \sD_{\mathcal{T}_{t}}$.
    
    Get valuation scores for $\sD_t$ by comparing to $\sD_{v}$.

    Prune a proportion $p$ with the highest data values: $\tilde{\sD}_{t}$. 

    Train model $\mathcal{M}$ on $\tilde{\sD}_{t}$ and evaluate $\mathcal{M}$ on $\sD_{v}$.

    %Evaluate $\mathcal{M}$ on $\sD_{v}$. %\tcp{\sam{Are we using the validation set to train on?}}
    
 }
 %\vspace{-2pt}
\end{algorithm}

\begin{figure}
    \centering
    \includegraphics[width=0.65\textwidth]{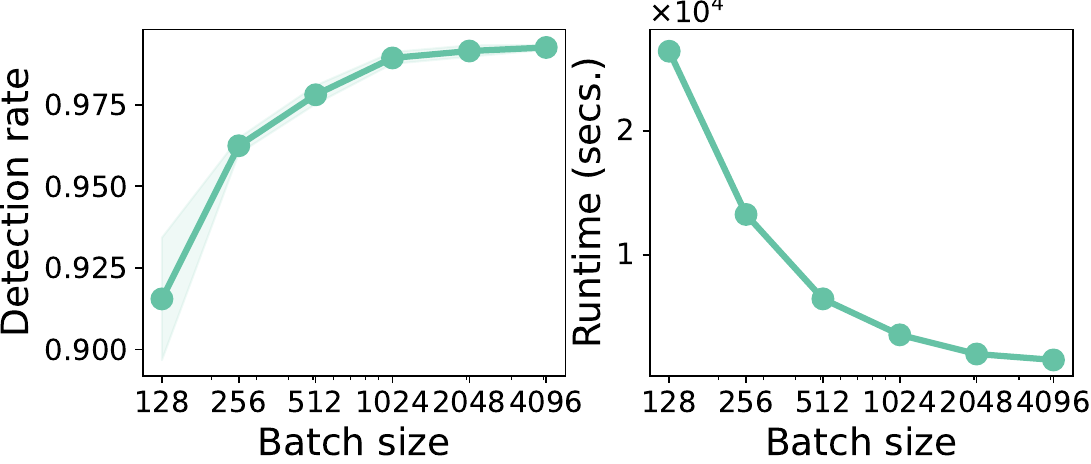}
    \caption{\textbf{\emph{SAVA} performance as function of the batch size, $N_i$}. The CIFAR$10$ dataset with noisy label detection. Left: Detection rate. Right: detection runtimes in seconds.}
    \label{fig:batch_size_ablation}
 %   \vspace{-2pt}
\end{figure}

\subsection{Incremental learning}

We split the CIFAR10 dataset into $5$ equally sized partitions with all classes, and we incrementally build up the dataset such that it grows in size as one would train a production system~\citep{baby2022incremental}. We inject the data with noisy labels and noisy feature corruptions, then perform the data valuation. We compare our proposed method with \emph{LAVA}~\citep{just2023lava}. After valuing our training set which is incrementally updated and grows in size throughout the incremental learning, we greedily discard $30\%$ of the data that are the most dissimilar to the validation set and train on the pruned dataset. We report the final accuracy of a ResNet18~\citep{he2016deep} model. We summarize this experimental setup in~\Cref{alg:incremental_learning}.

\subsection{Performance as a function of batch size} 
Following the discussion on the batch size in~\Cref{section_discussion}, we measure our model performance w.r.t. different batch sizes $N_i \in \{ 128, 256,..., 4{,}096\}$. We show that the performance converges with increasing batch sizes. In this particular setting, the batch sizes of $1{,}024, 2{,}048$, and $4{,}096$ will perform comparably in terms of detection rate while the batch size of $4{,}096$ will consume less time for computation of the cost across batches, since there are less and $K_t$ is lower.

\subsection{Label-to-label distance caching} \label{sec:label2label}

We study the difference in performance and runtime between \emph{SAVA}~\Cref{alg:sava} and using \emph{SAVA} with label-to-label (l2l) matrix caching discussed in~\Cref{section_discussion} using CIFAR$10$ with noisy label corruptions. We show that there are almost identical detection rates between the two variants of \emph{SAVA} with and without l2l caching~\Cref{fig:l2l_caching_ablation}. Meanwhile, the runtime is significantly reduced using this caching strategy and is similar to \emph{LAVA} despite having to solve a quadratic number of small batch-level OT problems.% \vunguyen{just curious if this trick can also be applied to LAVA?} \sam{The label-to-label matrix is calculated once for LAVA over the entire dataset and reused.}
\begin{figure}
    \vspace{-1pt}
    \centering
    \includegraphics[width=0.95\textwidth]
    {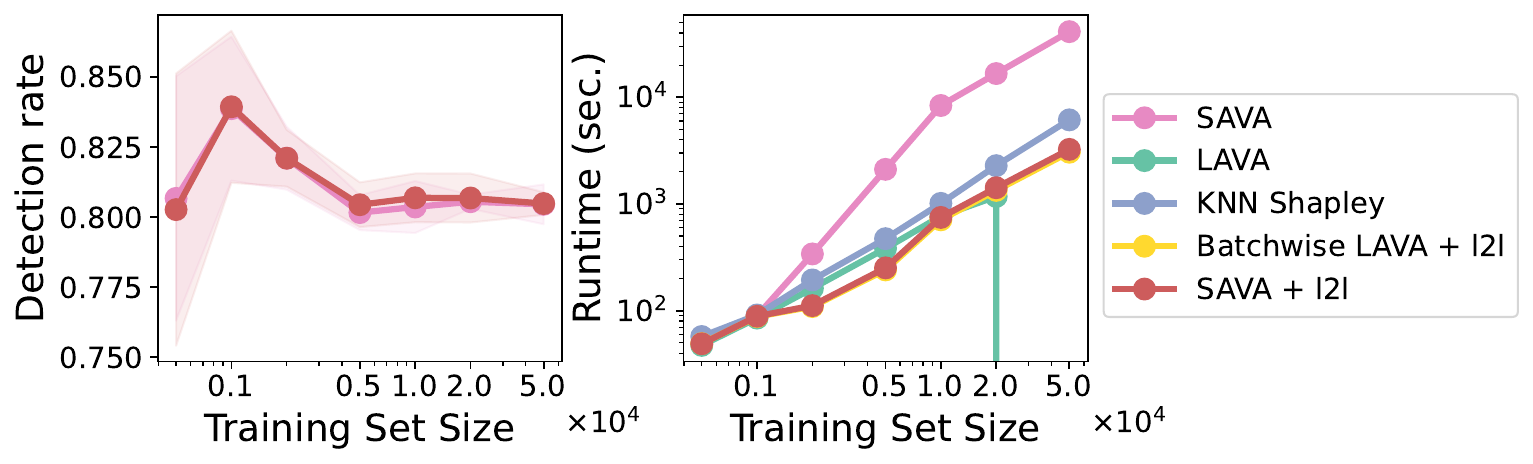}
   % \vspace{-1pt}
    \caption{\textbf{Label-to-label matrix caching significantly reduces runtime}. Left: \emph{SAVA} with and without label-2-label (l2l) matrix caching performs similarly in detecting noisy label corruption. Right: \emph{SAVA} with l2l runs just a quickly as \emph{LAVA} in terms of runtime in seconds on the same GPU.}
    \label{fig:l2l_caching_ablation}
    % the detection rates for $30\%$ noisy label corruptions for the CIFAR$10$ dataset after inspecting a prefix of values of size $N / 4$ for \emph{SAVA} with and without label-2-label matrix caching~\Cref{eq:groundcost_label}
  %  \vspace{-2pt}
\end{figure}

\begin{figure}
%\begin{wrapfigure}{r}{0.35\textwidth} % 'r' for right, and width of the figure
%\vspace{-1em}
    \centering
    \includegraphics[width=0.38\textwidth]{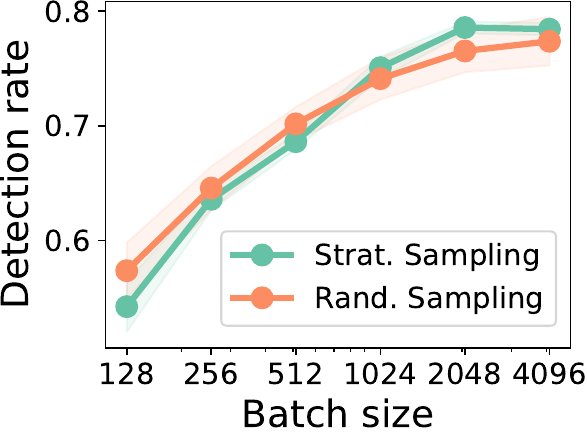}
    \caption{\textbf{Randomly sampling data for batch construction is robust}. Detection rates for \emph{SAVA} for random sampling to construct a batch versus stratified sampling which evenly samples data from different classes. The detection rates are calculated after valuation by inspecting the fraction of corrupted data for a prefix of size $N / 4$ for CIFAR$10$ with noisy features.}
    \label{fig:strat_sampling}
 %   \vspace{-3.5em}
%\end{wrapfigure}
\end{figure}

\subsection{Constructing batches}
We explore two options for constructing the batches including random sampling and stratified sampling in ~\Cref{fig:strat_sampling}. Since (uniformly) random sampling could result in a batch with an uneven distribution of classes, this could degrade the calculation of the class-wise Wasserstein distance Eq. (\ref{eq:cost_OT_batches}). To mitigate, against any imbalance we compare random sampling versus stratified sampling which evenly samples from each class to construct a batch. When valuing $10$k points from the CIFAR$10$ dataset with corrupted features, we find little difference in detection rates between these two sampling schemes. This might be a consideration when the ratio of the number of classes in the dataset to the batch size is higher.

\subsection{On the robustness of Batch-wise LAVA}
\label{sec:batchwise_lava_robustness}
From the corruption detection experiments, we established that \emph{Batch-wise LAVA} has remarkable performance in~\Cref{sec:corruption_exp}. However, as we vary the batch-size we notice that for small batch-sizes and large batch-sizes \emph{Batch-wise LAVA} performance deteriorates dramatically~\Cref{fig:batchwise_lava}. For small batches this is due to \emph{Batch-wise LAVA} not having enough clean points in the validation batch to detect corrupt data points in the training batch. For large batches the final batch in the \texttt{dataloader} is usually smaller than the specified batch-size and so these data points in the final batch will also suffer from not enough points in the validation batch to compare against. \emph{SAVA} is able to overcome this issue since the information from all batches is aggregated and weighted by the optimal plan between batches, $\textcolor{orange}{\pi^*_{ij}(\bar\mu_t, \bar\mu_v)}$, in lines 7 and 10 in~\Cref{alg:sava}.

\begin{figure*}
%    \vspace{-1pt}
    \centering
    \includegraphics[width=0.99\textwidth]{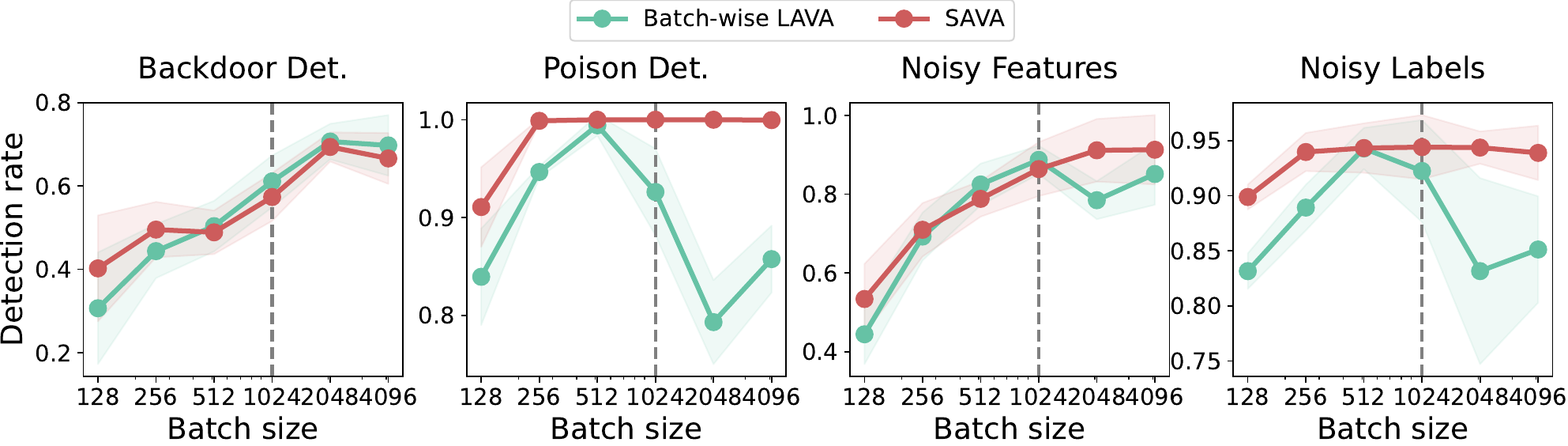}
%    \vspace{-2pt}
    \caption{\textbf{\emph{Batch-wise LAVA} is not robust to different batchsizes}. For $4$ different corruptions on a training dataset of size $5$k with validation dataset size $2$k we measure the detection rate for varying batch sizes for \emph{SAVA} and \emph{Batch-wise LAVA}. The dashed grey line centered on $1024$ denotes the batch size used in~\Cref{fig:cifar10_corruptions_detection}.} 
    \label{fig:batchwise_lava}
 %   \vspace{-1pt}
\end{figure*}

\section{Related Works}
\label{app:related_works}

\textbf{Data valuation.} A simple way to value a datapoint is through the leave-out-out (LOO) error; i.e. the change in performance when the point is omitted, this is inefficient to perform in practice. Influence functions~\citep{koh2017understanding} approximate LOO retraining using expensive approximations of the Hessian of the NN weights. In a similar vein, TracIn~\citep{pruthi2020estimating} traces the influence of a training point on a test point by looking at the difference in the loss throughout training. Another way to value data is by using data Shapley values~\cite{ghorbani2019data, jia2019towards}, extensions include the Beta Shapley~\citep{kwon2021beta}. K-nearest neighbour models can address the computation cost of the data Shapley~\citep{jia2019efficient, wang2023threshold, wang2024efficient}. Instead of using the Shapley value one can use ``the core'' from the game theory literature for data valuation~\citep{yan2021if}. An entire dataset can be valued by its diversity, practically this can be done by assessing its volume: the determinant of the left Gram~\citep{xu2021validation}. An initialized model can also be utilized for data valuation where sets are available from contributors~\citep{wu2022davinz}. The Banzhaf value can also be used for data valuation~\citep{wang2022data}. Data valuation using out-of-bag estimators have also been shown to be effective~\citep{kwon2023data}. Our approach builds upon \emph{LAVA} which uses the gradient OT distance between the validation and training sets to assign a value to training points~\citep{just2023lava}. The OpenDataVal benchmark is available with many implementations of data valuation methods~\citep{jiang2023opendataval}.

\textbf{Active learning.} Active learning is characterized by selecting points from an unlabeled pool of data for labeling and then using the newly labeled datapoint to update a model~\citep{settles2009active}. Active learning is related to data valuation since a notion of importance is needed to value unlabeled points to then select a label. Unlabeled samples can be valued by using the prediction disagreement from a probabilistic model~\citep{houlsby2011bayesian}, this disagreement can also be obtained from multiple models~\citep{freund1997selective}. This approach of using the disagreement of a probabilistic model can also be thought of as selecting points to label which are the most uncertain via the predictive entropy using probabilistic neural networks~\citep{gal2017deep, kirsch2019batchbald}. Alternatively picking points to label can be thought of as a summarization problem by obtaining a coreset of representative data~\citep{sener2017active, mirzasoleiman2020coresets, coleman2019selection}. Samples to be labeled can be selected by interpreting the classifier output probabilities as a confidence~\citep{li2006confidence}.

\begin{figure*}
%\vspace{-15pt}
    \centering
    \includegraphics[width=0.99\textwidth]{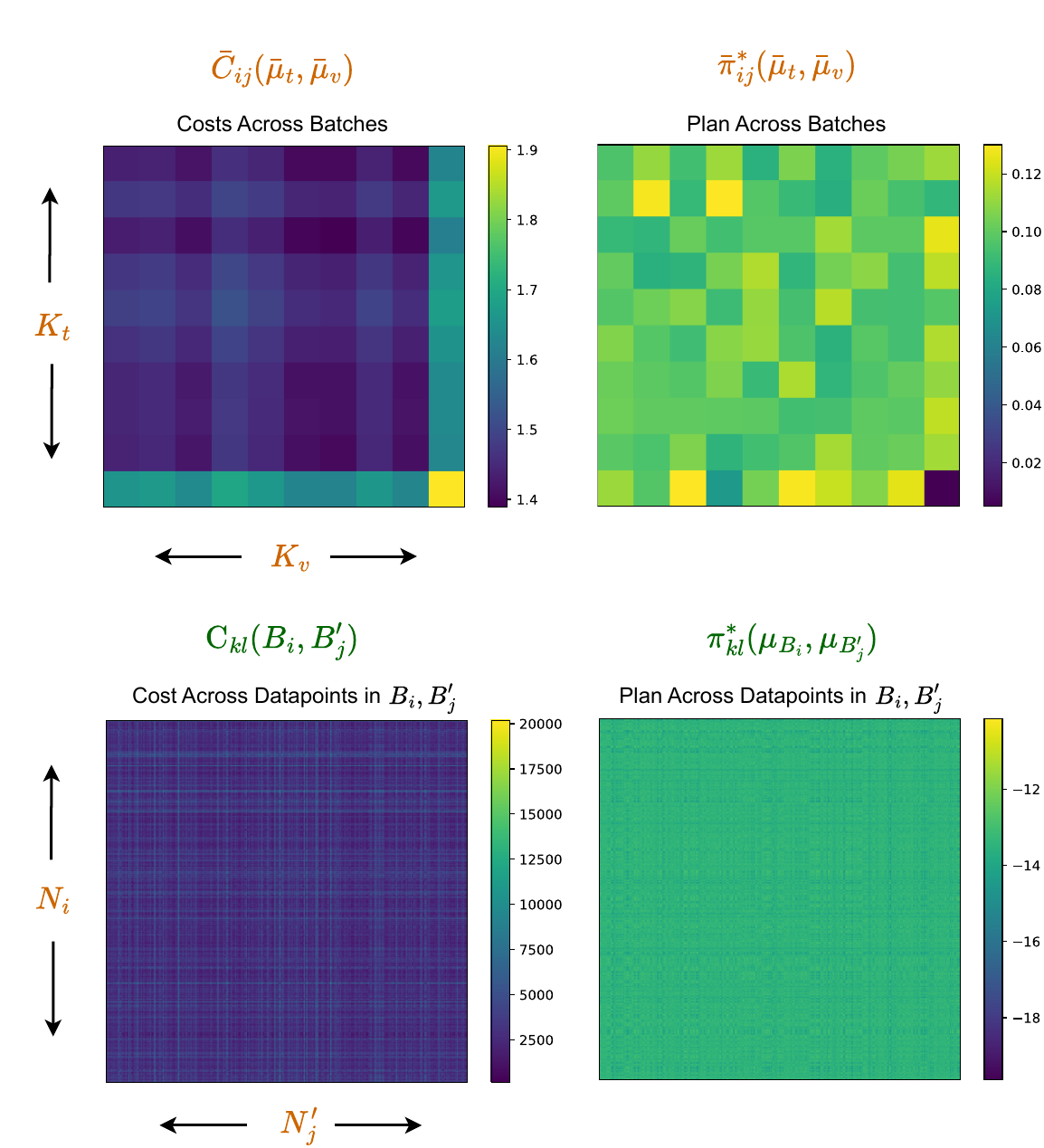}
%    \vspace{-5pt}
    \caption{\textbf{Visualization of the main artifacts in~\Cref{alg:sava}}.  For a noisy CIFAR$10$ valuation problem with a training and validation set size of $10$k and \emph{SAVA}  batch size of $1024$, we visualize the main artifacts of the \emph{SAVA}  algorithm for illustrative purposes. From left to right, from the top row to the bottom row: the first matrix is the cost matrix between batches: \textcolor{plt_orange}{$\bar{C}(\bar{\mu_t}, \bar{\mu_v})$} and then the optimal plan \textcolor{plt_orange}{$\bar{\pi}^*(\bar\mu_t, \bar\mu_v)$} is the associated plan which is the solution to the optimal transport problem. In the second row we visualize \textcolor{plt_darkgreen}{$C(\mu_{B_i}, \mu_{B_{j}'})$} the optimal transport distance between points in the final batch $B_i$ from the training set and the final batch $B'_j$ in the validation set and its corresponding optimal plan \textcolor{plt_darkgreen}{$\pi^*(\mu_{B_i}, \mu_{B_{j}'})$}, we have used a $\log$ transform to on the optimal plan between datapoints to help viewing it.}
    \label{fig:sava_alg_artifacts}
%    \vspace{-5pt}
\end{figure*}

\textbf{Data selection.} Active learning is used to select points to label and so its importance metric doesn't use label information, however, it has been shown to achieve competitive results for data selection; speeding up the generalization curve over the course of training~\citep{park2022active}. The influence a point has on the training loss as a metric of informativeness has been shown to accelerate the training of neural networks~\citep{loshchilov2015online}. Similarly picking points that reduce the variance of the gradient speeds up training~\citep{katharopoulos2018not, johnson2018training}. Instead of focusing on the training loss, one can select data according to the influence on the validation loss~\citep{mindermann2022prioritized}. Instead of selecting data to train on, one can equivalently prune away uninformative data. The data's contribution of the gradient norm of the loss with respect to model parameters is a natural measure for deciding which datapoints to prune from a dataset~\citep{paul2021deep}. One can also prune data by how similar embeddings are to a cluster center or prototype~\citep{sorscher2022beyond} and by assessing diversity within each cluster~\citep{abbas2023semdedup, tirumala2023d4}. It has also been shown that pruning data according to how easy they are to be forgotten over the course of training - as a measure of difficulty - results in training on less data while maintaining performance~\citep{toneva2018empirical}. These data selection methods although related, do not directly measure the importance of each training datapoint with respect to a clean validation set like \emph{LAVA}~\citep{just2023lava} and \emph{SAVA}. Meta-learning is also used to learn datapoint importance weights by evaluating with a clean validation set~\citep{ren2018learning}. Similarly to \emph{LAVA} and \emph{SAVA} the distributional distance between a clean validation set and a large noisy dataset can be assessed using n-grams in NLP for selecting data to train large language models~\citep{xie2023data}.

%Influence functions (INF): this technique is known for being fragile [1,2,3] when applied to deep neural networks due to several reasons. Influence function approximation formula assumes the linearity and convexity while neural networks are typically nonlinear and non-convex. Moreover, the influence function assumes the model is trained to converge while early stopping is frequently used for training deep neural networks.

%\definecolor{plt_orange}{HTML}{cc6600}
%\definecolor{plt_darkgreen}{HTML}{006600}
\section{SAVA Visualization}
\label{app:SAVA_visual}
To gain insight into how the estimated OT matrices from our proposed \emph{SAVA}, we visualize the artifacts ~\Cref{alg:sava} in~\Cref{fig:sava_alg_artifacts}.

\section{Other Discussions}\label{app:other_disscussions}

\textbf{Overfitting and Approximation.}
Recently, \citet[\S8.4]{peyre2019computational} revealed an important property that solving exactly the OT problem may lead to overfitting. %~\citep[\S8.4]{peyre2019computational}. 
Therefore, investing excessive efforts to compute OT exactly would not only be expensive but also self-defeating since it would lead to overfitting within the computation of OT itself. As a result, our batch approximation can be considered as a regularization for OT. We show empirically for certain cases that the batch approximation (\emph{SAVA}) performs better than the original OT (\emph{LAVA}) in terms of quality while we surpass \emph{LAVA} in memory requirement. %\sam{I propose removing this section, I think it cause some confusion for reviewers.} \tam{Yeah, we may move it to the appendix.}

%% MinibatchOT
%\input{minibatch_ot_backup.tex}

\end{appendices}

\end{document}